 \documentclass[ijoc,sglanonrev]{informs4}
\PassOptionsToPackage{table}{xcolor}
\usepackage{eqndefns-left} % For checking the display equation width and equation environment definitions %
\RequirePackage{tgtermes}
\RequirePackage{newtxtext}
\RequirePackage{newtxmath}
\RequirePackage{bm}
\RequirePackage{endnotes}

\OneAndAHalfSpacedXII % Current default line spacing
\usepackage{algorithm}
\usepackage{algpseudocode}
\usepackage{tikz}
\usepackage[T1]{fontenc}

\usepackage{enumitem}
\usepackage{amsmath}
\newtheorem{definition}{Definition}
\newtheorem{proposition}{Proposition}
\newtheorem{remark}{Remark}
\def\ECHowTheorems{%
  \setcounter{definition}{0}\def\thedefinition{EC.\arabic{definition}}%
  \setcounter{proposition}{0}\def\theproposition{EC.\arabic{proposition}}%
  \setcounter{remark}{0}\def\theremark{EC.\arabic{remark}}%
}
\usepackage{graphicx}
\usepackage{booktabs}
\usepackage{multirow}
\usepackage{pifont}% http://ctan.org/pkg/pifont
\usepackage{caption,subcaption}
\usepackage{adjustbox}
\usepackage{soul, color, xcolor}
\definecolor{myorange}{RGB}{255,165,0}

\usepackage[table]{xcolor} % 导入 colortbl 宏包
\usepackage{caption,subcaption}
\usepackage{tcolorbox}
\tcbuselibrary{skins, breakable}
\usepackage{xcolor}
\definecolor{mygray}{RGB}{192,192,192}
\usepackage{enumitem}
\usepackage{amsmath}

\usepackage{booktabs}
\usepackage{multirow}
\usepackage{pifont}% 
\usepackage{adjustbox}
\usepackage{booktabs}
\usepackage{soul, color, xcolor}
\definecolor{myorange}{RGB}{255,165,0}
\usepackage{makecell}
\usepackage[table]{xcolor} % 导入 colortbl 宏包

\usepackage{algorithm}
\usepackage{algpseudocode}

\usepackage{xcolor}
\definecolor{mygray}{RGB}{192,192,192}

\usepackage{float}
\usepackage{xspace}
\tcbset{
  aibox/.style={
    width=\textwidth,
    top=10pt,
    colback=white,
    colframe=black,
    colbacktitle=black,
    enhanced,
    center,
    attach boxed title to top center={yshift=-0.1in},
    boxed title style={boxrule=0pt,colframe=white,},
  }
}
\newtcolorbox{AIbox}[2][]{aibox,title=#2,#1}
\usepackage{listings}
\definecolor{myblue}{RGB}{100, 150, 200}
\definecolor{mygreen}{RGB}{80, 160, 80}

\definecolor{darkgreen}{rgb}{0.0, 0.5, 0.0}
\definecolor{darkgray}{gray}{0.4}
\definecolor{maroon}{rgb}{0.5, 0.0, 0.0}
\definecolor{navy}{rgb}{0.0, 0.0, 0.5}
\definecolor{teal}{rgb}{0.0, 0.5, 0.5}

\usepackage{float}
\usepackage{xspace}
\tcbset{
  aibox/.style={
    width=\textwidth,
    top=10pt,
    colback=white,
    colframe=black,
    colbacktitle=black,
    enhanced,
    center,
    attach boxed title to top center={yshift=-0.1in},
    boxed title style={boxrule=0pt,colframe=white,},
  }
}

\usepackage{listings}
\definecolor{myblue}{RGB}{100, 150, 200}
\definecolor{mygreen}{RGB}{80, 160, 80}

\definecolor{darkgreen}{rgb}{0.0, 0.5, 0.0}
\definecolor{darkgray}{gray}{0.4}
\definecolor{maroon}{rgb}{0.5, 0.0, 0.0}
\definecolor{navy}{rgb}{0.0, 0.0, 0.5}
\definecolor{teal}{rgb}{0.0, 0.5, 0.5}

\definecolor{promptgray}{gray}{0.85}
\definecolor{promptbody}{gray}{1.0}

\newtcolorbox{promptbox}[1]{
  enhanced,
  colback=promptbody,
  colframe=black,
  boxrule=0.6pt,
  arc=2pt,
  left=8pt, right=8pt, top=6pt, bottom=6pt,
  fonttitle=\bfseries,
  coltitle=black,
  colbacktitle=promptgray,
  attach boxed title to top left={xshift=0pt, yshift=0pt},
  boxed title style={colframe=black, boxrule=0.6pt, arc=2pt},
  title={#1},
  breakable
}
\usepackage{amsmath,amsfonts,bm}

\def\eqref#1{equation~\ref{#1}}
\def\1{\bm{1}}

\DeclareMathAlphabet{\mathsfit}{\encodingdefault}{\sfdefault}{m}{sl}
\SetMathAlphabet{\mathsfit}{bold}{\encodingdefault}{\sfdefault}{bx}{n}

\usepackage{hyperref}
\usepackage{url}
\usepackage{algorithm}

\usepackage{natbib}
 \bibpunct[, ]{(}{)}{,}{a}{}{,}%
 \def\bibfont{\small}%
\EquationsNumberedThrough    % Default: (1), (2), ...
\ECRepeatTheorems  %  

\MANUSCRIPTNO{}

\def\theARTICLETOP{}
\RRHSecondLine{}
\LRHSecondLine{}
\makeatletter
\def\theARTICLEABSTRACT{%
  \HOOKb
  \vspace*{18pt}
  \begin{minipage}[t]{\textwidth}\parindent1em
    \ABSfont
    \noindent\theABSTRACT\endgraf
    \vskip5pt
    \theFUNDING
    \theKEYWORDS
    \theSUBJECTCLASS
    \theAREAOFREVIEW
    \theMSCCLASS
    \theORMSCLASS
    \if@BLINDREV\else\theHISTORY\fi
    \noindent\hrulefill
  \end{minipage}%
  \vspace*{0pt}
}
\makeatother
\begin{document}
%%%%%%%%%%%%%%%%

% Outcomment only when entries are known. Otherwise leave as is and
%   default values will be used.
%\setcounter{page}{1}
%\VOLUME{00}%
%\NO{0}%
%\MONTH{Xxxxx}% (month or a similar seasonal id)
%\YEAR{0000}% e.g., 2005
%\FIRSTPAGE{000}%
%\LASTPAGE{000}%
%\SHORTYEAR{00}% shortened year (two-digit)
%\ISSUE{0000} %
%\LONGFIRSTPAGE{0001} %
%\DOI{10.1287/xxxx.0000.0000}%

% Author's names for the running heads
% Sample depending on the number of authors;
% \RUNAUTHOR{Jones}
% \RUNAUTHOR{Jones and Wilson}
% \RUNAUTHOR{Jones, Miller, and Wilson}
% \RUNAUTHOR{Jones et al.} % for four or more authors
% Enter authors following the given pattern:
%\RUNAUTHOR{}
% Double-anonymous submission: author identities are withheld in the running head.
\RUNAUTHOR{Kang et al.}

% Title or shortened title suitable for running heads. Sample:
% \RUNTITLE{Predictive Maintenance in Manufacturing}
% Enter the (shortened) title:
\RUNTITLE{Observability-Safe Memory Retention for Long-Horizon Language Agents}

% Full title. Sample:
% \TITLE{Optimal Resource Allocation in Humanitarian Logistics: A Stochastic Programming Approach}
% Enter the full title:
\TITLE{Learning What to Remember: Observability-Safe Memory Retention via Constrained Optimization for Long-Horizon Language Agents}

% Block of authors and their affiliations starts here:
% NOTE: Authors with same affiliation, if the order of authors allows,
%   should be entered in ONE field, separated by a comma.
%   \EMAIL field can be repeated if more than one author
\ARTICLEAUTHORS{%
%\AUTHOR{John Doe,\textsuperscript{a} Jane Smith,\textsuperscript{b}}
%\AFF{\textsuperscript{a}Department of Industrial Engineering, University of XYZ, \EMAIL{john.doe@xyz.edu; \textsuperscript{b}Department of Computer Science, University of ABC, \EMAIL{jane.smith@abc.edu}} 
\AUTHOR{Qingcan Kang\textsuperscript{1,*}, Mingyang Liu\textsuperscript{2,*}, Shixiong Kai\textsuperscript{3}, Kaichao Liang\textsuperscript{3}, Tao Zhong\textsuperscript{3}, Mingxuan Yuan\textsuperscript{3,\dag}}
\AFF{\textsuperscript{1}Department of Industrial Engineering and Decision Analytics, Hong Kong University of Science and Technology;
\textsuperscript{2}Department of Computer Science, City University of Hong Kong;
\textsuperscript{3}Huawei Noah's Ark Lab\\[2pt]
\textsuperscript{*}Equal contribution.\quad
\textsuperscript{\dag}Corresponding author.}
} % end of the block

\ABSTRACT{%
Long-horizon language agents accumulate observations, reasoning traces, and retrieved facts that exceed context windows, making memory retention a fundamental resource-allocation problem. Existing memory systems treat retention as a local problem and do not model long-term consequences under observability constraints. To fill this gap, we formulate memory retention as a constrained stochastic optimization problem with budget feasibility, evidence utility, and delayed costs including miss, reacquisition, and stale penalties. We show that this multi-step optimization problem is NP-hard, making exact solution computationally intractable. Moreover, deployment decisions must be made under partial observability. To address these challenges, we propose \textbf{OSL-MR} (Observability-Safe Learning for Memory Retention), a learning-based framework that enforces a strict separation between online-observable features and offline-available supervision. OSL-MR combines an evidence learner trained from realized evidence with a Mixed-Score heuristic that serves as a deployable online-safe baseline and an inductive prior. The policy learns to identify query-conditioned evidence from interaction data and remains deployable using only online-observable features. Experiments on LoCoMo and LongMemEval show that OSL-MR outperforms recency-based, Generative Agents-style, and other heuristic baselines, especially under tight budgets. The Mixed-Score prior improves precision and recall, and sensitivity analysis shows robustness across cost settings. On small, exactly solvable instances, we demonstrate that even a perfect single-step optimizer cannot anticipate future demand shifts, while OSL-MR stays significantly closer to the dynamic-programming optimum, confirming the necessity of the sequential formulation and reinforcing the effectiveness of our approximation. These results establish constrained stochastic optimization and optimization-guided learning as a principled foundation for memory management in long-horizon agents.
}%

%\FUNDING{This research was supported by [grant number, funding agency].}

%Supplemental Material:
%Data Ethics & Reproducibility Note:
% Reproducibility: code, data splits, configurations, and scripts that regenerate
% all tables and figures (including the exact-DP optimality study of
% Section~\ref{sec:optimality}) are provided in an anonymized repository for review
% and will be deposited in the IJOC software repository upon acceptance; see
% Section~\ref{sec:reproducibility}.

% Sample
%\KEYWORDS{Stochastic programming, Decision support,Uncertainty, Disaster response, Optimization}

% Fill in data. If unknown, outcomment the field
\KEYWORDS{Memory retention, long-horizon language agents, constrained stochastic optimization, sequential decision-making, observability separation}
%\HISTORY{Received: Month DD, YYYY; Accepted: Month DD, YYYY; Published Online: Month DD, YYYY}

\maketitle
%%%%%%%%%%%%%%%%%%%%%%%%%%%%%%%%%%%%%%%%%%%%%%%%%%%%%%%%%%%%%%%%%%%%%%

% Text of your paper here

\section{Introduction}
\label{sec:introduction}

Long-horizon language agents, built on large language models (LLMs), require memory to maintain continuity across extended interactions, reuse established facts, and adapt to user preferences \citep{zhang2024surveymemorymechanismlarge, du2026memoryautonomousllmagentsmechanisms, jiang2026anatomyagenticmemorytaxonomy}. In principle, an agent could store every past utterance, but this is rarely practical: long conversations exceed context-window or storage budgets, retrieval over large stores increases cost, and outdated memories may lead to incorrect responses \citep{liu2025comprehensivesurveylongcontext}. A memory system therefore faces a persistent retention problem: under a limited budget, it must decide which memories to keep for future interactions and which to discard. This problem is inherently long-horizon: a memory irrelevant now may become essential later, while a recent or salient memory may never support a future answer.

Most existing memory systems rely on heuristic scoring, retrieval optimization, or learned compression \citep{park2023generative, packer2023memgpt, zhong2024memorybank, jiang2023llmlingua}, but treat retention as a local decision that optimizes immediate relevance or similarity. They are myopic: they do not model how the retained cache carries over across interactions, how evicting a memory can create delayed misses or costly reacquisition, or how retained information can become stale. They also lack a principled formulation of the long-horizon consequences---missing future evidence, reacquisition cost, and stale information under partial observability---and do not specify what an optimal retention policy should achieve under limited capacity and delayed feedback.

\textbf{We fill this gap by formulating memory retention as a constrained stochastic optimization problem} that explicitly accounts for budget feasibility, evidence utility, and delayed costs including miss penalty, reacquisition delay, and stale-information risk. Rooted in classical operations research (OR), this formulation views retention as sequential resource allocation under uncertainty, with a hard budget and outcome-dependent costs that materialize only in the future. To our knowledge, existing work has not provided a unified formulation that
simultaneously captures hard budget constraints, delayed retention
consequences, reacquisition cost, stale-information risk, and partial
observability. Our formulation makes these assumptions explicit and yields a
natural learning objective for deployable retention policies. The closest optimization-based approach, Fofadiya \& Tiwari \citep{fofadiya2026forgetting}, treats retention as a \emph{single-step} problem that optimizes immediate relevance, ignoring how current decisions affect future evidence availability and incur delayed penalties, and thus misses the essential long-horizon nature of the problem.

A central challenge in operationalizing this formulation is observability. Many signals needed to evaluate a retention decision---gold evidence, answer correctness, semantic freshness---are available only after the decision is made. Using them at deployment would create an unrealistic information advantage. We therefore enforce a strict separation between \emph{online-observable features} (query context, memory metadata, interaction history) and \emph{offline-available supervision (OAS)} (gold evidence, answer text, ground-truth freshness). OAS is used only for training and evaluation, while deployable policies rely solely on online inputs. This discipline guarantees that any policy learned under it can be deployed without oracle access to future information, making it suitable for online interactive agents.

Building on this formulation and observability discipline, we propose
\textbf{OSL-MR} (\textbf{O}bservability-\textbf{S}afe \textbf{L}earning for \textbf{M}emory \textbf{R}etention), a staged framework for constrained memory retention in long-horizon language agents. Since the full multi-step retention
problem is NP-hard (Proposition~\ref{prop:nphard}) and deployment decisions must be made under partial observability, exact optimization is generally
impractical in realistic long-horizon settings. OSL-MR provides a \textbf{learning-based approximation framework} with two complementary components: (i) an evidence learner trained offline from interaction logs using supervision derived from realized evidence, and (ii) a Mixed-Score heuristic that serves as both a cold-start deployable baseline and an online-safe feasibility prior. Deployment proceeds in two stages. Mixed-Score operates as the standalone policy from the first user query while all interactions are logged; once sufficient data accumulate, the evidence learner is trained offline and deployed as a frozen policy, replacing the heuristic without violating observability. By learning query-conditioned evidence scores directly from realized evidence labels, OSL-MR avoids relying on hand-designed importance labels or general-purpose importance estimators. The Mixed-Score value can also be fed to the learner as an additional scalar feature, serving as an online-safe inductive prior.

A key design choice is to learn from evidence-membership signals rather than from direct reward. The reward is inherently non-decomposable and combinatorial, since retention quality depends on sets of memories, not individual items, so assigning reward to individual decisions is ill-posed and computationally intractable. Evidence membership instead provides a tractable per-memory signal that aligns with the structure of the optimization objective while remaining fully compatible with online deployment.

Our contributions are as follows:
\begin{itemize}
    \item \textbf{Problem formulation.} We formalize memory retention as a constrained sequential decision problem under a hard budget, modeling evidence utility, storage cost, miss penalty, reacquisition delay, and stale risk, and prove that the full multi-step problem is NP-hard (Proposition~\ref{prop:nphard}), establishing the need for approximation.
    \item \textbf{Observability-safe learning framework.} We introduce OSL-MR, which enforces a strict online/OAS separation and integrates the Mixed-Score prior, the optimization formulation, and an evidence learner trained from direct evidence supervision.
    \item \textbf{Empirical gains.} On LoCoMo and LongMemEval, OSL-MR consistently outperforms recency-based methods, Generative Agents-style scoring, Mixed-Score, and behavior-cloning variants, especially under tight budgets. Ablation results show that the Mixed-Score prior improves precision and modestly improves recall, and sensitivity analysis confirms robustness across cost configurations.
    \item \textbf{Optimality analysis on solvable instances.} On small, exactly-solvable instances of the same problem, we demonstrate that single-step optimization is insufficient to anticipate future demand shifts: a perfect single-step optimizer is far from optimal under shifting demand, while OSL-MR stays significantly closer to the dynamic-programming optimum, confirming the necessity of the sequential formulation and reinforcing the effectiveness of our learning-based approximation.
\end{itemize}
\section{Related Work}
\label{sec:related}

\subsection{Memory Systems and Long-Horizon Language Agents}

Long-horizon language agents rely on external memory to extend their effective context. Early systems store episodic experiences or tool traces in vector databases; MemGPT \citep{packer2023memgpt} introduces operating-system-inspired hierarchical paging, while MemoryBank \citep{zhong2024memorybank} adds Ebbinghaus-style forgetting to retrieval. Generative Agents \citep{park2023generative} rank memories by recency, relevance, and importance, but their static importance scores capture general salience rather than query-specific evidence value, limiting effectiveness under capacity constraints. Other systems extend different stages of the memory lifecycle: Mem0 \citep{chhikara2025mem0} structures memory writing, MEM1 \citep{zhou2025mem1} learns compact latent representations via RL, and prompt compression \citep{jiang2023llmlingua} reduces context cost. Across these systems, retention is typically coupled with retrieval,
summarization, or compression rather than treated as a standalone sequential decision problem. To our knowledge, existing work has not provided a unified formulation that jointly captures hard retention budgets, delayed evidence
utility, stale-memory risk, and partial observability---the gap we address.

\subsection{Memory Retention as Resource Allocation}

Memory retention is a resource-allocation problem under uncertainty, classically modeled in operations research as constrained optimization that trades off immediate gains against future penalties; surveys \citep{hu2025memory, huang2026rethinkingmemorymechanismsfoundation} note the absence of such unified formulations for retention. Several works add optimization for cost-constrained retrieval (e.g., AdaGReS \citep{adagres2025}, CORAG \citep{wang2024corag}), but target one-turn context selection rather than long-horizon retention. For retention itself, Fofadiya and Tiwari \citep{fofadiya2026forgetting} formulate a single-step budgeted optimization that maximizes immediate relevance independently at each step, ignoring how current decisions affect future evidence availability or incur delayed penalties.

At its combinatorial core, the per-step retention subproblem generalizes budgeted maximum coverage, a classical NP-hard problem \citep{khuller1999budgeted, feige1998threshold}; we make this precise in Proposition~\ref{prop:nphard}. The corresponding known-coverage objective is monotone submodular, so greedy algorithms attain $(1-1/e)$-type guarantees under knapsack constraints when the coverage sets are known \citep{nemhauser1978analysis, khuller1999budgeted, feige1998threshold} (detailed in the online supplement, Remark~\ref{rem:greedy}). Our stance follows directly: at deployment the coverage sets are unobservable, so the guarantee is not directly attainable, which is exactly why we pursue a greedy, learning-augmented approximation that estimates evidence value from logged supervision rather than solving the problem exactly.

Classical OR methods such as integer or dynamic programming do not apply directly here: the problem is partially observable---the distribution of future evidence demands depends on unknown user queries---and the state space is combinatorial. On deliberately reduced instances, however, it becomes exactly solvable by dynamic programming, which we exploit in Section~\ref{sec:optimality} to obtain ground-truth optima and measure how near-optimal a deployable policy can be. At benchmark scale exact optimization is out of reach, so we adopt a learning-based approximation: OSL-MR is derived from a multi-step sequential retention objective, but learns deployable evidence scores from logged supervision rather than solving the exact dynamic program or directly optimizing a non-decomposable reward online. By contrast, BudgetMem approaches
\citep{alla2026budgetmemlearningselectivememory,
zhang2026learningqueryawarebudgettierrouting} primarily focus on one-step budgeting or routing decisions, while Mem-T
\citep{yue2026memtdensifyingrewardslonghorizon} and MemAct
\citep{zhang2026memoryactionautonomouscontext} consider long-horizon agent behavior but do not formulate retention as a multi-step budgeted optimization problem with partial observability and online/OAS separation. Table~\ref{tab:comparison} compares representative approaches across five dimensions---constrained optimization, delayed feedback, partial observability, online/OAS separation, and long-horizon sequential view; OSL-MR is the first to satisfy all five.

\subsection{Learning-Based Memory Policies and Observability Separation}

Recent learning-based systems optimize memory policies using downstream signals: Mem-$\alpha$ \citep{wang2025memalpha} applies RL to learn memory construction, Acon \citep{kang2025acon} optimizes context compression for long-horizon agents, and MemRL \citep{zhang2026memrl} frames retrieval as a value-based decision updated from environmental feedback. These improve retrieval or compression rather than retention under hard budgets. The STALE benchmark \citep{chao2026stale} shows that LLM agents struggle to detect when a stored memory has become outdated, motivating a separation between observable temporal signals and latent semantic validity---precisely our online/OAS principle. Whereas many learning methods rely on supervision unavailable at deployment, OSL-MR uses gold evidence only during offline training and restricts deployed policies to online-observable features, preserving validity under partial observability. Our framework is complementary to retrieval-side optimization (e.g., MemRL); unifying retention and retrieval under a single constrained optimization formulation is a promising direction for future work.

\begin{table}[t]
\centering
\small
\caption{Comparison of representative memory-related approaches. Criteria refer specifically to retention under limited capacity rather than retrieval, compression, or routing in general. OSL-MR is the only compared method that simultaneously satisfies all five criteria.}
\label{tab:comparison}
\begin{tabular}{lccccc}
\toprule
\multirow{2}{*}{\textbf{Method}} & \textbf{Budgeted} & \textbf{Delayed} & \textbf{Partial} & \textbf{Online/OAS} & \textbf{Long-Horizon} \\
& \textbf{Retention} & \textbf{Feedback} & \textbf{Observability} & \textbf{Separation} & \textbf{Sequential} \\
\midrule
Generative Agents & $\times$ & $\times$ & $\times$ & $\times$ & $\times$ \\
BudgetMem & $\times$ & $\times$ & $\times$ & $\times$ & $\times$ \\
Mem-T & $\times$ & $\checkmark$ & $\times$ & $\times$ & $\checkmark$ \\
MemAct & $\times$ & $\times$ & $\times$ & $\times$ & $\checkmark$ \\
Fofadiya \& Tiwari & $\checkmark$ (single-step) & $\times$ & $\times$ & $\times$ & $\times$ \\
\midrule
\textbf{OSL-MR (ours)} & $\checkmark$ (multi-step)& $\checkmark$ & $\checkmark$ & $\checkmark$ & $\checkmark$ \\
\bottomrule
\end{tabular}
\end{table}
\section{Methodology}
\label{sec:method}

We propose OSL-MR, a memory retention framework for long-horizon language agents under strict budget constraints and partial observability. Memory decisions are made sequentially with limited capacity, yet their consequences---information loss, recomputation cost, and stale usage---surface only in the future, which makes naive heuristic rules or local scoring strategies insufficient for this delayed, partially observable problem.

OSL-MR addresses this by formulating memory retention as a constrained partially observable stochastic optimization problem under an explicit observability separation (Section~\ref{sec:observability}). The framework integrates three tightly coupled components, elaborated in the following subsections: (i) a constrained optimization formulation that defines long-horizon retention objectives under budget limitations (Section~\ref{sec:formulation}), (ii) a Mixed-Score heuristic that provides a fully deployable cold-start solution and a strong inductive baseline (Section~\ref{sec:mixed-score}), and (iii) an evidence learner trained offline from interaction logs that refines the heuristic policy using supervision derived from realized evidence (Section~\ref{sec:evidence-learning}). Section~\ref{sec:bc} discusses a behavior-cloning variant for comparison. Figure~\ref{fig:method_overview} illustrates the overall framework and its data flow.

\begin{figure}[!t]
\centering
\includegraphics[width=\columnwidth]{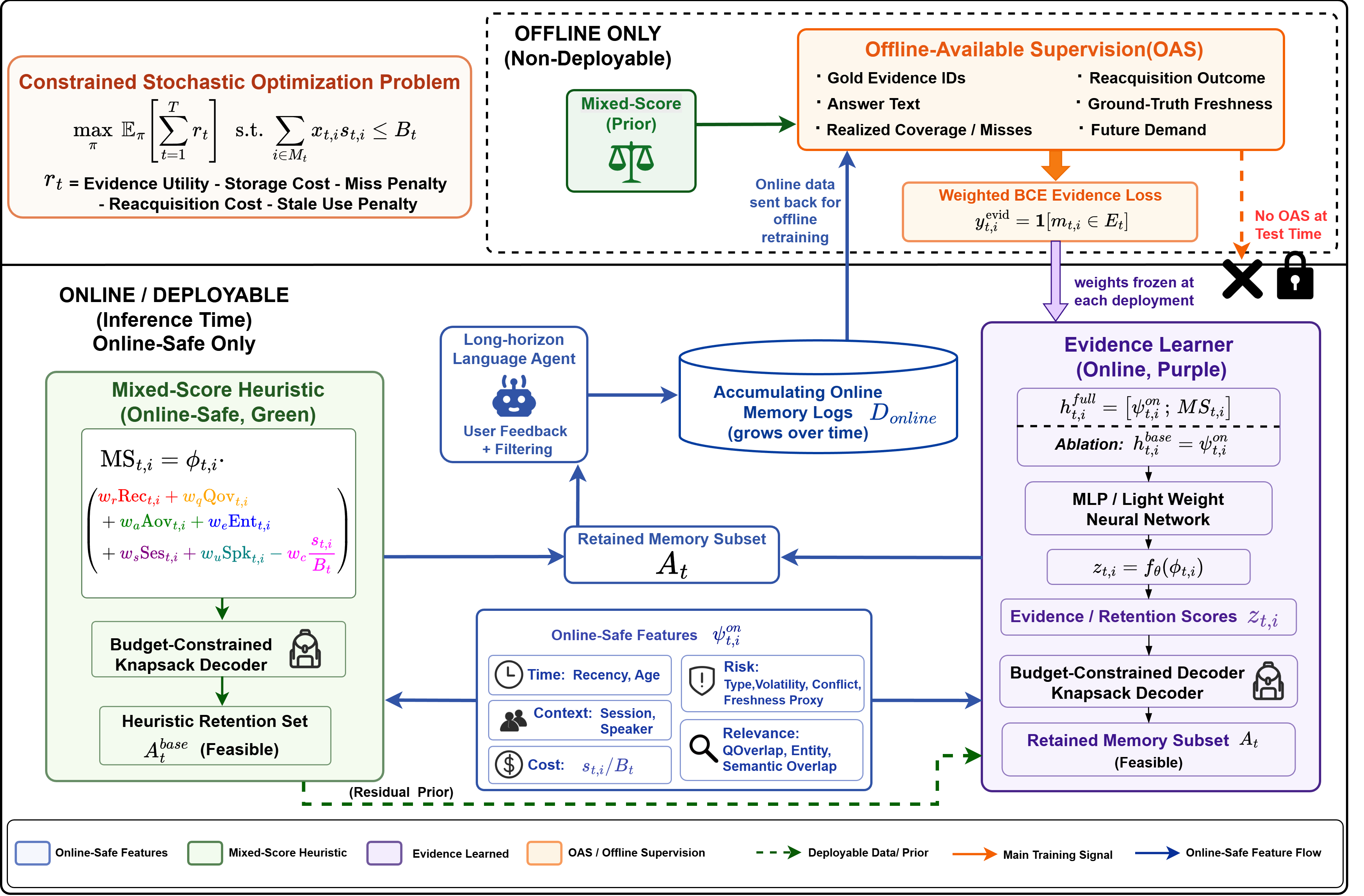}
\caption{Overview of the OSL-MR framework. The framework separates online-observable inputs (query, memory metadata, interaction history) from offline-available supervision (gold evidence, answer text). During cold-start, the Mixed-Score heuristic selects retained memories under budget $B_t$ and logs agent-user interaction data (queries, responses, and subsequent evidence outcomes). After sufficient data collection, an evidence learner is trained offline using gold evidence labels; the learned policy is then frozen and deployed, relying only on online features for inference.}
\label{fig:method_overview}
\end{figure}

\subsection{Observability Separation}
\label{sec:observability}

We first formalize the information asymmetry in memory retention, distinguishing three types of signals. \textbf{Online-observable inputs} include the current query, memory metadata, and interaction context, available at decision time. \textbf{Offline-available supervision (OAS)} includes gold evidence sets, answer content, coverage signals, and downstream outcome statistics, accessible only after the interaction completes. \textbf{Evaluation signals} are derived from realized system outcomes and used strictly for analysis and benchmarking.

A key constraint in OSL-MR is that deployed policies are strictly restricted to online-observable inputs. This ensures that inference-time decisions are realistic and do not rely on oracle information, so any improvement must arise from better generalization over observable features rather than leakage of supervision.

A subtle issue arises in modeling memory freshness: recency is directly observable, but semantic validity is not---a memory may remain valid long after creation, or become obsolete shortly after being written. We therefore estimate stale-use risk using observable proxies; the exact definition and implementation are given in Section~\ref{sec:mixed-score}.

\subsection{Constrained Memory Retention Problem}
\label{sec:formulation}

We model memory retention as a constrained, partially observable sequential decision process over a horizon of $T$ interaction steps.

\noindent\textbf{State.}
All information available at the start of decision step $t$, before any new memory is selected, constitutes the pre-decision state
\[
s_t = (C_{t-1},\,\Phi_t,\,H_t),
\]
where $C_{t-1}$ is the retained store (cache) produced by the previous step (with $C_0=\emptyset$), $\Phi_t=\{\phi_{t,i}\}$ is the vector of per-memory freshness levels, and $H_t$ is an interaction-history summary (e.g., session/speaker hit statistics). A set of newly created memories $N_t$ (utterances produced since the last decision) then arrives, so the agent chooses among the candidate pool
\[
M_t = C_{t-1}\cup N_t,
\]
i.e., the memories currently in the store plus the newcomers. Each candidate $i\in M_t$ has a size $s_{t,i}$, online-observable features $\psi_{t,i}^{on}$, and a freshness level $\phi_{t,i}\in(0,1]$ that decays with its age (Section~\ref{sec:mixed-score}).

\noindent\textbf{Action and budget.}
The agent selects a retained subset $A_t \subseteq M_t$, encoded by indicators $x_{t,i}\in\{0,1\}$, subject to a strict storage budget $\sum_{i\in M_t} x_{t,i}s_{t,i} \le B_t$.

\noindent\textbf{Transition.}
The chosen set becomes the next store, $C_t = A_t$, while evicted memories $M_t\setminus A_t$ leave the cache; a dropped memory is no longer directly selectable and, should a later query need it, can be recovered only through a delayed, costly reacquisition channel (charged via the $T_t^{\mathrm{reacq}}$ term below). The remaining state advances accordingly: freshness levels decay deterministically with elapsed time (and reset when a memory is accessed), yielding $\Phi_{t+1}$, and the interaction summary updates to $H_{t+1}$ from realized outcomes, so the next pre-decision state is $s_{t+1}=(C_t,\Phi_{t+1},H_{t+1})$. This transition is the source of long-horizon coupling: a memory discarded at step $t$ is unavailable to a later query at step $t'>t$, so an eviction made now can cause a future miss and trigger reacquisition or stale-information costs.

At step \(t\) the current query \(Q_t\) is observable and conditions the retention decision (e.g., through the question-overlap features of Section~\ref{sec:mixed-score}), whereas the gold evidence demand set \(E_t\) it induces is offline-available supervision (OAS), realized only after the interaction and used solely to define the reward and training labels, never as a policy input.
Each candidate memory contributes partial evidence coverage through a mapping \(\mathrm{cov}(i)\), where \(\mathrm{cov}(i)\) denotes the set of evidence items supported by memory \(i\). This mapping is likewise used only for offline supervision and evaluation, not as an input to deployed policies. We define the realized coverage as:
\[
\mathrm{Cov}_t(A_t)=E_t \cap \bigcup_{i\in A_t}\mathrm{cov}(i).
\]

To capture both correctness and completeness of retention, we define token-weighted metrics:
\[
\begin{aligned}
T_t^{\mathrm{hit}}   &= \sum_{e \in E_t} \tau(e)\,\mathbf{1}[e \in \mathrm{Cov}_t(A_t)], \\
T_t^{\mathrm{miss}}  &= \sum_{e \in E_t} \tau(e)\,\mathbf{1}[e \notin \mathrm{Cov}_t(A_t)], \\
T_t^{\mathrm{reacq}} &= \sum_{e \in E_t \setminus \mathrm{Cov}_t(A_t)} \tau(e)\,\mathbf{1}[e \text{ is recoverable}], \\
T_t^{\mathrm{stale}} &= \sum_{i \in A_t} s_{t,i}\,\mathbf{1}[m_{t,i} \text{ is stale and used}],
\end{aligned}
\]
with a full coverage indicator $F_t=\mathbf{1}[E_t \subseteq \mathrm{Cov}_t(A_t)]$. The per-step reward is defined as:
\[
r_t := \alpha_{\mathrm{hit}}T_t^{\mathrm{hit}} + \alpha_{\mathrm{full}}F_t - \alpha_{\mathrm{store}}\sum_{i\in M_t} x_{t,i}s_{t,i}
- \alpha_{\mathrm{miss}}T_t^{\mathrm{miss}} - \alpha_{\mathrm{reacq}}T_t^{\mathrm{reacq}} - \alpha_{\mathrm{stale}}T_t^{\mathrm{stale}}.
\]
The objective is to find a policy $\pi$, mapping the observable state to a feasible retention action, that maximizes expected cumulative reward under the per-step budget constraint:
\[
\max_{\pi}\; \mathbb{E}_{\pi}\!\left[\sum_{t=1}^T r_t\right]
\quad\text{s.t.}\quad
\sum_{i\in M_t} x_{t,i}s_{t,i}\le B_t \quad \forall\, t\in\{1,\dots,T\},
\]
where the expectation is taken over the stochastic arrival of memories $N_t$ and queries $Q_t$, and the state evolves through the transition $C_t=A_t$ defined above, which links consecutive steps. Because the action at step $t$ determines the candidate pool $M_{t+1}=C_t\cup N_{t+1}$, the problem is a genuine sequential decision process rather than a sequence of independent subproblems.

\noindent\textbf{Multi-step structure and trade-offs.}
This carry-over makes the objective genuinely multi-step, and two features couple the steps. First, the hard budget forces selective retention, yet eviction costs are \emph{delayed}: a miss, reacquisition, or stale-use penalty is charged at the future step where the memory is actually needed, not at the step where it is dropped. Second, the action $A_t$ must be committed \emph{before} the evidence demand $E_t$ is revealed (Section~\ref{sec:observability}), so the policy cannot simply retain what each query needs and must instead anticipate future evidence demand, weighing short-term gains against long-term costs. A single-step optimizer, by contrast, maximizes only the immediate reward $r_t$ and may discard a memory that looks irrelevant now but becomes critical later. The deployed OSL-MR policy makes a state-dependent decision at each step---its candidate pool and history features depend on past actions---and is trained from sequential logs and evaluated on cumulative reward; it is thus an online approximation to the sequential optimum rather than an exact planner, whose closeness to the true optimum we quantify on small, exactly-solvable instances in Section~\ref{sec:optimality}.

\noindent\textbf{From intractability to learning-based approximation.}
This constrained partially observable stochastic optimization admits no closed-form or standard-solver solution: future evidence demands depend on unknown user queries, rewards involve delayed outcomes not observable at decision time, and the state space is combinatorial in the number of memories. We prove that even the single-step version is NP-hard via reduction from budgeted maximum coverage (Proposition~\ref{prop:nphard}). Because the multi-step problem contains the single-step case as a special instance (T=1), the full multi-step problem is NP-hard as well. Hardness persists under partial observability, which only restricts the policy's information without simplifying the underlying combinatorial selection. We therefore learn a retention policy from interaction logs. Since no logs exist initially, we first deploy a lightweight, budget-aware heuristic (Mixed-Score), intentionally stronger than naive baselines such as recency or Generative Agents, that keeps the system operational from the first query while logging all user-agent interactions. Once enough logs accumulate, we train an evidence learner offline under the same observability constraints to approximate the optimal policy. We additionally feed the Mixed-Score value to the learner as an extra input feature, which acts as an inductive prior that improves precision and modestly improves recall (ablation in Section~\ref{sec:experiments}). Mixed-Score thus plays two roles---cold-start deployment and a feature prior---while the learner remains the core performance driver, circumventing unobservability by exploiting offline evidence labels.

\noindent\textbf{Challenge of reward supervision.}
One might instead train a model directly on the per-step or cumulative reward. However, reward is a sequence-level scalar that cannot be decomposed into per-memory credit: a miss penalty occurs when a future query's evidence is missing, but which specific discarded memory caused it is ambiguous when several memories support the same evidence. Obtaining reward labels would moreover require evaluating every retained subset $A_t \subseteq M_t$---a combinatorial space of size $2^{|M_t|}$. Direct reward regression is therefore both ill-defined and computationally infeasible, so we introduce a decomposable proxy based on evidence membership.

\subsection{Mixed-Score Retention Policy}
\label{sec:mixed-score}

Mixed-Score is the foundational retention mechanism in OSL-MR. Rather than relying on recency or static importance, it integrates multiple online-observable signals into a unified utility function reflecting both relevance and resource cost. All components are constructed under strict observability constraints, ensuring direct deployability.

At each decision step \(t\), the agent observes a memory pool \(M_t\) of size \(n_t = |M_t|\). For each candidate memory utterance \(m_i\) with token size \(s_{t,i}\), Mixed-Score computes:
\[
\mathrm{MS}_{t,i} = \phi_{t,i} \cdot \biggl(
\begin{aligned}
& w_r \mathrm{Rec}_{t,i} + w_q \mathrm{Qov}_{t,i} + w_a \mathrm{Aov}_{t,i} \\
& \qquad + w_e \mathrm{Ent}_{t,i} + w_s \mathrm{Ses}_{t,i} + w_u \mathrm{Spk}_{t,i} - w_c \frac{s_{t,i}}{B_t}
\end{aligned}
\Biggr),\]
where each term is an online-observable utility signal: a recency score \(\mathrm{Rec}_{t,i}\), a question-overlap score \(\mathrm{Qov}_{t,i}\), an (offline-only, \(w_a{=}0\) at deployment) answer-overlap score \(\mathrm{Aov}_{t,i}\), an entity-overlap score \(\mathrm{Ent}_{t,i}\), session and speaker indicators \(\mathrm{Ses}_{t,i},\mathrm{Spk}_{t,i}\), and a size penalty \(s_{t,i}/B_t\); the non-negative weight vector \(\mathbf{w}=(w_r,w_q,w_a,w_e,w_s,w_u,w_c)\) is tuned on the development split via grid search. The precise definition of each term and the calibrated weights are given in the online supplement (Section~\ref{ec:mixed-score-features}).

The freshness multiplier \(\phi_{t,i}\in(0,1]\) is a continuous, online-computable proxy that downweights stale memories, so that outdated content contributes less to retention. Each item is assigned a binary type \(\mathrm{type}_i\in\{\)`\text{stable}', `\text{temporary}'\(\}\) at creation time (an offline LLM classifier or a deployment-time heuristic), and freshness decays with the age \(\Delta t_i = t - t_i^{\mathrm{store}}\) at a type-specific rate, \(\phi_{t,i} = \exp(-\gamma_i \Delta t_i)\), where the effective rate \(\gamma_i\) combines a type-specific base decay rate with a monotone time-scaling factor. Stable memories (e.g., long-lasting facts) decay slowly, whereas temporary memories (e.g., recent plans or transient intents) decay quickly, so a low \(\phi_{t,i}\) reduces the overall score and discourages retaining outdated memories. The corresponding stale-risk signal is \(\mathrm{stale\_risk}_{t,i} = 1 - \phi_{t,i}\). The exact decay equations, the type taxonomy, and the calibrated values are given in the online supplement (Section~\ref{app:freshness-params}).

To maximize total utility under the budget, the retained set \(A_t\) is obtained by sorting candidates in descending order of the score \(\mathrm{MS}_{t,i}\)---which already embeds the size penalty \(-w_c\,s_{t,i}/B_t\), so no further division by size is needed---and greedily adding them until the budget constraint \(\sum_{i \in A_t} s_{t,i} \le B_t\) is exhausted. Compared to prior heuristics such as recency-based decay or static importance scoring, Mixed-Score provides a more expressive, query-conditioned representation of memory utility. Because the answer-overlap term is disabled at deployment (\(w_a=0\)), the deployed scorer uses only online-observable signals and is directly deployable.

Beyond serving as a standalone, competitive baseline, Mixed-Score enables cold-start deployment before any interaction data exist, ensuring functionality from the first user query, and acts as an inductive prior that guides the learning-based model described next.

\subsection{Evidence Learning with Mixed-Score Priors}
\label{sec:evidence-learning}

While Mixed-Score provides a strong and fully deployable heuristic policy, it remains manually designed. As interaction data accumulate, OSL-MR transitions to a data-driven regime where a learned model refines this heuristic using evidence supervision. The learner is trained offline and deployed with frozen parameters, introducing no additional observability requirements beyond the online features.

\noindent\textbf{Model architecture and training.}
The evidence learner is a lightweight feedforward neural network \(f_\theta\) that takes per-memory feature vectors \(h_{t,i}\) as input and outputs a logit \(z_{t,i}=f_\theta(h_{t,i})\), which is then squashed by a sigmoid to a probability. The input features \(h_{t,i}\) consist of the online-observable features \(\psi_{t,i}^{on}\) (query context, memory metadata, interaction history, and the stale-risk signal \(\mathrm{stale\_risk}_{t,i}=1-\phi_{t,i}\) from Section~\ref{sec:mixed-score}) and optionally the Mixed-Score prior. The network is a multilayer perceptron (MLP) with a single hidden layer; the exact architecture and training hyperparameters are detailed in Section~\ref{sec:experiments}. Training minimizes the weighted binary cross-entropy loss:
\[
\mathcal{L}_{\mathrm{evid}}
=
\sum_{t,i}
\omega_{t,i}\,
\mathrm{BCE}
\bigl(
\sigma(f_\theta(h_{t,i})),
y_{t,i}^{\mathrm{evid}}
\bigr),
\]
where the evidence-support label is defined as $y_{t,i}^{\mathrm{evid}}
=
\mathbf{1}\!\left[
\mathrm{cov}_t(i)\cap E_t \neq \emptyset
\right]$. Here, \(\mathrm{cov}_t(i)\) denotes the set of evidence items supported by memory
\(i\) with respect to query \(Q_t\), \(E_t\) is the evidence demand set for
query \(Q_t\), \(\omega_{t,i}\) balances the positive and negative class
frequencies, and \(\sigma(\cdot)\) is the sigmoid function. %The coverage map
%\(\mathrm{cov}_t(i)\) is used only for offline supervision and is not available
%to the deployed policy.

\noindent\textbf{Why evidence membership is a valid proxy.}
The reward function contains a hit term \(\alpha_{\text{hit}}T_t^{\text{hit}}\) that directly sums over retained evidence, so a policy that retains evidence-relevant memories naturally increases the hit component. Moreover, all delayed penalties (miss, reacquisition, stale) are triggered precisely when a memory that would have been evidence is discarded or stale. Thus a model that accurately predicts evidence membership indirectly learns to maximize hits and avoid penalty-inducing conditions. While not strictly equivalent to maximizing cumulative reward (due to non-linear interactions and delays), this proxy provides a dense, decomposable, well-defined supervision signal that empirically yields high reward (Section~\ref{sec:experiments}), consistent with the alignment between reward rankings and evidence F1/precision observed in our experiments.

\noindent\textbf{Inductive prior from Mixed-Score.}
Instead of learning from scratch, the model can leverage Mixed-Score as an additional input feature. Two feature variants are considered:
\[
h_{t,i}^{\text{full}} = [\psi_{t,i}^{\text{on}}; \mathrm{MS}_{t,i}], \qquad
h_{t,i}^{\text{base}} = \psi_{t,i}^{\text{on}},
\]
where \(\mathrm{MS}_{t,i}\) is the Mixed-Score value (see Section~\ref{sec:mixed-score}). This lets the learner either rely purely on online signals or incorporate the heuristic as an inductive prior. Our ablation (Section~\ref{sec:experiments}) shows that adding the Mixed-Score prior consistently improves precision and modestly improves recall.

\noindent\textbf{Inference and budget-aware selection.}
At inference time, the trained network outputs a logit for each candidate memory. The retained set \(A_t\) is obtained via constrained selection:
\[
A_t = \mathrm{Select}\bigl(f_\theta(h_t), B_t\bigr),\quad \sum_{i\in A_t} s_{t,i} \le B_t,
\]
where \(\mathrm{Select}\) sorts memories by predicted probability (or logit) in descending order and adds them until the budget is exhausted, ensuring the final policy respects the budget while using only online-observable features.

\paragraph{Deployment protocol and answer-feature usage.}
Unless otherwise specified, online policies do not use answer-derived features. During cold start, the system runs a standalone Mixed-Score policy \(\mathrm{MS}^{0}\), whose parameters are set from domain expertise and whose answer-overlap weight is fixed to \(w_a=0\). This cold-start policy is used only to maintain stable system operation before sufficient interaction logs are available. After enough logs have been collected, Mixed-Score calibration is performed offline using logged queries, candidate memories, system-generated answers, and supervision or proxy labels; answer-derived features may be used only in this offline calibration stage. After calibration, we construct an answer-disabled calibrated prior, denoted by \(\mathrm{MS}^{cal,0}\), by setting the answer-overlap weight to \(w_a=0\). OSL-MR is then trained offline with inputs \(h_{t,i}=[\psi^{on}_{t,i};\mathrm{MS}^{cal,0}_{t,i}]\), where \(\psi^{on}\) contains only pre-answer online features. Once trained, \(f_\theta\) is frozen and deployed online with the same answer-free input form. Thus, the cold-start Mixed-Score policy, OSL-MR training features, and deployed OSL-MR policy all exclude answer-derived features; answer information and evidence labels are used only offline to calibrate the prior and construct supervision labels, never as inputs to the learned policy at inference. The overall training and deployment procedure is summarized in Algorithm~\ref{alg:framework} in the online supplement (Section~\ref{ec:algorithm}).

\subsection{Behavior-Cloning Variant}
\label{sec:bc}

For comparison, we construct a behavior-cloning baseline that learns from a non-deployable teacher policy built using offline-available supervision (OAS). For each training instance, starting from the Mixed-Score selection, the teacher performs a local greedy search: it considers swapping one retained memory with a currently omitted candidate, evaluates the resulting retained set using the reward of Section~\ref{sec:formulation} (which depends on gold evidence), and accepts the swap if it improves the reward, continuing until no improvement is found. The teacher thus produces a higher-quality retained set \(A_t^{teacher}\) than the original Mixed-Score, but relies on OAS and is therefore unsuitable for deployment.

The pseudo-labels for behavior cloning are defined as \(y_{t,i}^{BC} = \mathbf{1}[m_{t,i}\in A_t^{teacher}]\). The BC-learner is trained using the same architecture and loss as the evidence learner, but with these pseudo-labels instead of gold evidence labels, and uses the same online-observable features \(\psi_{t,i}^{on}\) (including the stale-risk signal \(\mathrm{stale\_risk}_{t,i}\)) as input. This variant evaluates whether imitating an improved heuristic policy can substitute for direct evidence supervision.

% The OSL-MR training-and-deployment algorithm has been moved to the online
% supplement (Section~\ref{ec:algorithm}, Algorithm~\ref{alg:framework}).
\section{Numerical Experiments}
\label{sec:experiments}

\subsection{Experimental Setup}
\label{sec:eval-protocol}

\noindent\textbf{Online/OAS discipline.}
All experiments strictly follow the observability separation defined in Section~\ref{sec:method}.
Deployable policies are restricted to online-observable inputs, including query context, memory metadata, recency, question overlap, entity overlap, session/speaker information, and cost features.
Offline-available supervision (OAS) includes gold evidence, answer text, realized coverage, miss events, reacquisition cost, and ground-truth freshness. OAS is used only for training or evaluation of oracle/teacher variants and is never accessible to deployable policies.
Any method that relies on OAS at inference time is treated as an oracle baseline, not a deployable policy.

\noindent\textbf{Benchmarks.}
We evaluate on two public long-horizon memory benchmarks: LoCoMo~\citep{maharana2024evaluatinglongtermconversationalmemory} and LongMemEval~\citep{wu2025longmemevalbenchmarkingchatassistants}. Both benchmarks provide multi-turn conversational interactions with gold evidence labels, enabling systematic evaluation of memory retention under budget constraints.

\noindent\textbf{Sequential evaluation protocol.}
We evaluate every method as a policy in the sequential decision process of Section~\ref{sec:method}, rather than on independent per-query instances. Each conversation forms an episode whose questions are processed in temporal order. The policy maintains a persistent retained store (cache) $C_t$ across steps: at step $t$ it observes the candidate pool $M_t = C_{t-1}\cup N_t$, consisting of the items still in the cache from the previous step ($C_{t-1}$) together with the newly arrived turns $N_t$ since the last decision; it then selects a retained subset $A_t\subseteq M_t$ under the storage budget $B_t$, and the cache is updated as $C_t = A_t$. A memory evicted at an earlier step therefore leaves the active pool and is no longer directly selectable, so a later query that requires it incurs a miss together with a reacquisition cost (the item is recoverable from the full history only at a cost), exactly as modeled by the reward $r_t$ in Section~\ref{sec:method}; per-memory freshness decays with elapsed steps. This carry-over makes the evaluation genuinely multi-step: a decision at step $t$ affects candidate availability and realized costs at all later steps $t'>t$. All compared policies---heuristic, behavior-cloning, and learned---are run under this identical sequential protocol and differ only in how they score and select memories at each step; reported metrics are aggregated over all steps of all test episodes. The per-step reward uses the calibrated coefficients listed below.

\noindent\textbf{Compared methods.}
We compare six approaches, described in full in the online supplement (Section~\ref{ec:baselines}):
\textbf{Recency}, which retains the most recent items that fit under the budget;
\textbf{Generative Agents (GA)} \citep{park2023generative}, scoring memories by recency, relevance, and importance, in two importance variants (GA-heuristic and GA-LLM);
\textbf{Mixed-Score}, our online-safe heuristic prior (Section~\ref{sec:mixed-score});
\textbf{BC-learner}, a behavior-cloning baseline that imitates a non-deployable OAS teacher (Section~\ref{sec:bc});
\textbf{OSL-MR (full)}, our complete evidence-learning method; and
\textbf{OSL-MR (w/o prior)}, an ablation that removes the Mixed-Score prior from the feature vector while keeping all other settings fixed.
All deployable policies use only online-observable features at inference, and every method---heuristic, behavior-cloning, and learned---is run under the identical sequential protocol above. We do not include the single-step optimizer of \citet{fofadiya2026forgetting} as a separate baseline (no public code); its best case is subsumed by the \emph{exact} single-step optimizer (Myopic) studied in Section~\ref{sec:optimality}, which already leaves a large optimality gap ($19\%$--$32\%$) once demand is temporally coupled, showing that the limitation is intrinsic to the single-step objective rather than to any one heuristic (see Section~\ref{ec:baselines}).

\noindent\textbf{Hyperparameters and cost settings.}
The evidence learner is a single-hidden-layer MLP (16 hidden units, sigmoid output). Default reward coefficients use storage cost as the unit baseline: $\alpha_{\mathrm{store}}=1.0$, $\alpha_{\mathrm{hit}}=4.0$, $\alpha_{\mathrm{reacq}}=6.0$, $\alpha_{\mathrm{miss}}=12.0$, $\alpha_{\mathrm{stale}}=6.0$, and $\alpha_{\mathrm{full}}=64.0$; the miss penalty is largest because failing to answer is most harmful, the reacquisition and stale penalties are moderate, and the full-coverage bonus strongly rewards complete evidence retention. Budgets reflect average evidence length: $32/64/128$ tokens for LoCoMo and $256/512/1024$ tokens for LongMemEval, covering under- and over-provisioned regimes. Full training settings, the offline LLM-annotation setup (used only to build the GA-LLM baseline and freshness proxies, never at evaluation), and the sensitivity-analysis grid are given in the online supplement (Section~\ref{ec:metrics}).

\noindent\textbf{Evaluation protocol.}
Learned policies (BC-learner and OSL-MR) are evaluated with five-fold cross-validation over episodes (four folds for training and validation, one held out for testing), repeated with three random seeds; unless otherwise noted, every table entry is the mean over held-out folds with the standard deviation in parentheses, using early stopping and L2 regularization as safeguards. The deterministic Mixed-Score heuristic is calibrated by grid search on the strictly separated training split with its answer-overlap weight fixed to $w_a=0$ (so it uses only online-observable features) and evaluated on the held-out test split; the same online-safe form serves as the cold-start policy and the learner's prior $\mathrm{MS}^{cal,0}$ (Section~\ref{sec:evidence-learning}). An information-favored variant that frees $w_a$ is reported only as a reference (Section~\ref{ec:prior-ablation}); full protocol details are in Section~\ref{ec:metrics}.

\noindent\textbf{Metrics.}
We report evidence precision, recall, F1, budget occupancy, and a shifted reward. Precision measures budget efficiency (capacity not wasted on irrelevant memories), recall measures evidence coverage, and F1 balances the two; budget occupancy is the fraction of the budget consumed by the retained subset and should be read together with precision and recall rather than as a stand-alone quality measure. The reward is the per-step reward of Section~\ref{sec:method} reported after a purely cosmetic global additive shift that does not affect any relative comparison. Formal definitions of all metrics and the shift constants are given in the online supplement (Section~\ref{ec:metrics}).

\subsection{Main Results}
\label{sec:main-results}

\noindent\textbf{The proposed method OSL-MR substantially outperforms heuristics.}
Across LoCoMo and LongMemEval, OSL-MR attains the best precision, recall, F1, and reward among all non-oracle methods (Recency, GA, Mixed-Score, and BC-learner) at every evaluated budget, and it does so in \emph{every} cross-validation fold, so its advantage is not an averaging artifact. For example, on LoCoMo at budget 128 (Table~\ref{tab:locomo-main-results}), OSL-MR reaches precision $0.201$, recall $0.561$, F1 $0.263$, and reward $246.209$ while consuming the least budget (occupancy $0.828$), whereas the strongest heuristic, Mixed-Score, trails on all four quality metrics (precision $0.051$, recall $0.401$, F1 $0.087$, reward $120.476$) yet runs at near-full occupancy ($0.988$), and Recency and the GA variants are weaker still (F1 below $0.03$, reward below $11$). OSL-MR also keeps occupancy below saturation throughout, and its lead over the best baseline (typically BC-learner) narrows as budgets loosen but never reverses: on LongMemEval the occupancy gap to BC-learner shrinks from $0.12$ at budget 256 to $0.04$ at budget 1024, while the heuristics' occupancy stays above $0.98$. Reward gains track the precision/recall and F1 ordering in every fold, confirming that the objective captures genuine retention quality rather than cost-calibration effects.

\noindent\textbf{GA performance and the importance--evidence mismatch.}
Both GA variants achieve low evidence F1 despite capturing plausible notions of memory salience. The gap is particularly instructive for GA-LLM: LLM‑prompted importance scores reflect general memory noteworthiness (e.g., an emotional event), but the evaluation metric measures whether the memory is gold evidence for the current query—a query‑conditional notion of utility. This suggests that static importance scores are structurally misaligned with query‑specific evidence needs. This mismatch adds further support for our evidence-supervised design: it strengthens the case for learning evidence membership directly from gold evidence labels, as OSL-MR does, rather than relying on an importance oracle.

\begin{table*}[t]
\centering
% \small
\footnotesize
\setlength{\tabcolsep}{4pt}  
\caption{LoCoMo results across memory budgets. Occupancy denotes budget utilization. Rewards are shifted by \(C = 476.136\) so that the worst baseline (GA-LLM at budget 128) is zero. Each entry is the mean over the cross-validation runs, with the standard deviation across runs in parentheses.}
\label{tab:locomo-main-results}
% mean with standard deviation in small parentheses
\newcommand{\val}[2]{#1\,\ensuremath{{\scriptstyle(\pm#2)}}}
\resizebox{\textwidth}{!}{%
\begin{tabular}{rl ccc c r}
\toprule
\textbf{Budget} & \textbf{Method} & \textbf{Precision} & \textbf{Recall} & \textbf{F1} & \textbf{Occupancy} & \textbf{Reward} \\
\midrule
32  & Recency      & \val{0.013}{0.009} & \val{0.032}{0.018} & \val{0.017}{0.011} & \val{0.953}{0.021} & \val{75.454}{58.158} \\
32  & GA-heuristic & \val{0.015}{0.007} & \val{0.035}{0.014} & \val{0.019}{0.009} & \val{0.948}{0.021} & \val{76.612}{57.608} \\
32  & GA-LLM       & \val{0.013}{0.007} & \val{0.028}{0.017} & \val{0.016}{0.009} & \val{0.947}{0.021} & \val{75.279}{56.474} \\
32  & Mixed-Score  & \val{0.043}{0.022} & \val{0.104}{0.036} & \val{0.057}{0.026} & \val{0.965}{0.016} & \val{94.729}{61.989} \\
32  & BC-learner   & \val{0.091}{0.012} & \val{0.107}{0.029} & \val{0.090}{0.017} & \val{0.734}{0.037} & \val{107.528}{57.888} \\
32  & OSL-MR       & \val{0.123}{0.023} & \val{0.143}{0.034} & \val{0.123}{0.027} & \val{0.665}{0.075} & \val{121.458}{57.386} \\
\midrule
64  & Recency      & \val{0.014}{0.008} & \val{0.063}{0.029} & \val{0.022}{0.011} & \val{0.971}{0.014} & \val{52.727}{60.776} \\
64  & GA-heuristic & \val{0.015}{0.007} & \val{0.075}{0.024} & \val{0.024}{0.010} & \val{0.971}{0.009} & \val{55.941}{60.675} \\
64  & GA-LLM       & \val{0.013}{0.007} & \val{0.059}{0.030} & \val{0.020}{0.011} & \val{0.971}{0.010} & \val{51.811}{61.207} \\
64  & Mixed-Score  & \val{0.049}{0.017} & \val{0.221}{0.052} & \val{0.077}{0.025} & \val{0.982}{0.008} & \val{104.971}{68.844} \\
64  & BC-learner   & \val{0.144}{0.040} & \val{0.257}{0.028} & \val{0.167}{0.027} & \val{0.835}{0.024} & \val{148.040}{41.633} \\
64  & OSL-MR       & \val{0.216}{0.058} & \val{0.347}{0.015} & \val{0.239}{0.038} & \val{0.794}{0.060} & \val{194.796}{38.486} \\
\midrule
128 & Recency      & \val{0.013}{0.006} & \val{0.098}{0.032} & \val{0.022}{0.009} & \val{0.983}{0.006} & \val{0.580}{62.698} \\
128 & GA-heuristic & \val{0.015}{0.004} & \val{0.130}{0.024} & \val{0.026}{0.007} & \val{0.980}{0.006} & \val{10.460}{59.630} \\
128 & GA-LLM       & \val{0.012}{0.004} & \val{0.100}{0.026} & \val{0.020}{0.007} & \val{0.981}{0.007} & \val{0.000}{61.080} \\
128 & Mixed-Score  & \val{0.051}{0.012} & \val{0.401}{0.052} & \val{0.087}{0.020} & \val{0.988}{0.005} & \val{120.476}{68.554} \\
128 & BC-learner   & \val{0.125}{0.028} & \val{0.460}{0.024} & \val{0.184}{0.029} & \val{0.900}{0.011} & \val{184.899}{26.024} \\
128 & OSL-MR       & \val{0.201}{0.067} & \val{0.561}{0.086} & \val{0.263}{0.068} & \val{0.828}{0.070} & \val{246.209}{90.323} \\
\bottomrule
\end{tabular}%
}
\end{table*}

\noindent\textbf{Direct evidence supervision outperforms behavior cloning.}
BC-learner improves over static heuristics but stays consistently weaker than OSL-MR. On LongMemEval at budget 256 (Table~\ref{tab:longmemeval}), OSL-MR leads BC-learner on all four quality metrics---precision $0.382$ versus $0.220$, recall $0.617$ versus $0.474$, F1 $0.401$ versus $0.258$, and reward $1347.546$ versus $1070.364$---while using less budget (occupancy $0.768$ versus $0.891$). OSL-MR keeps this advantage at the other budgets and on LoCoMo, confirming that gold evidence labels are a better‑aligned training target than teacher‑imitated pseudo‑labels; its reward gain reflects superior retention, not a cost trade‑off.

\begin{table}[t]
\centering
% \small
\footnotesize
\setlength{\tabcolsep}{4pt}  
\caption{LongMemEval results across memory budgets. Occupancy denotes budget utilization. Rewards are shifted by \(C = 5211.758\). Each entry is the mean over the cross-validation runs, with the standard deviation across runs in parentheses.}
\label{tab:longmemeval}
% mean with standard deviation in small parentheses
\newcommand{\val}[2]{#1\,\ensuremath{{\scriptstyle(\pm#2)}}}
\resizebox{\textwidth}{!}{%
\begin{tabular}{rl ccc c r}
\toprule
\textbf{Budget} & \textbf{Method} & \textbf{Precision} & \textbf{Recall} & \textbf{F1} & \textbf{Occupancy} & \textbf{Reward} \\
\midrule
256  & Recency      & \val{0.009}{0.005} & \val{0.054}{0.040} & \val{0.013}{0.007} & \val{0.988}{0.001} & \val{513.324}{1348.542} \\
256  & GA-heuristic & \val{0.014}{0.006} & \val{0.082}{0.061} & \val{0.020}{0.011} & \val{0.988}{0.001} & \val{537.194}{1333.780} \\
256  & GA-LLM       & \val{0.008}{0.004} & \val{0.051}{0.040} & \val{0.012}{0.006} & \val{0.985}{0.001} & \val{508.890}{1340.082} \\
256  & Mixed-Score  & \val{0.048}{0.010} & \val{0.319}{0.138} & \val{0.073}{0.018} & \val{0.993}{0.002} & \val{817.270}{1348.513} \\
256  & BC-learner   & \val{0.220}{0.067} & \val{0.474}{0.114} & \val{0.258}{0.074} & \val{0.891}{0.074} & \val{1070.364}{1400.738} \\
256  & OSL-MR       & \val{0.382}{0.125} & \val{0.617}{0.086} & \val{0.401}{0.093} & \val{0.768}{0.137} & \val{1347.546}{1331.998} \\
\midrule
512  & Recency      & \val{0.013}{0.006} & \val{0.120}{0.095} & \val{0.020}{0.010} & \val{0.992}{0.003} & \val{329.670}{1347.261} \\
512  & GA-heuristic & \val{0.017}{0.006} & \val{0.179}{0.106} & \val{0.028}{0.010} & \val{0.992}{0.003} & \val{390.298}{1343.121} \\
512  & GA-LLM       & \val{0.012}{0.004} & \val{0.129}{0.087} & \val{0.020}{0.008} & \val{0.989}{0.004} & \val{332.838}{1349.507} \\
512  & Mixed-Score  & \val{0.042}{0.008} & \val{0.452}{0.167} & \val{0.070}{0.015} & \val{0.995}{0.002} & \val{788.716}{1355.061} \\
512  & BC-learner   & \val{0.195}{0.073} & \val{0.627}{0.085} & \val{0.253}{0.082} & \val{0.874}{0.103} & \val{1158.614}{1421.444} \\
512  & OSL-MR       & \val{0.224}{0.066} & \val{0.739}{0.038} & \val{0.292}{0.061} & \val{0.870}{0.130} & \val{1362.376}{1429.565} \\
\midrule
1024 & Recency      & \val{0.016}{0.005} & \val{0.246}{0.143} & \val{0.027}{0.009} & \val{0.994}{0.004} & \val{0.000}{1353.018} \\
1024 & GA-heuristic & \val{0.019}{0.004} & \val{0.312}{0.137} & \val{0.032}{0.007} & \val{0.993}{0.005} & \val{63.702}{1358.589} \\
1024 & GA-LLM       & \val{0.016}{0.003} & \val{0.267}{0.138} & \val{0.027}{0.007} & \val{0.992}{0.005} & \val{5.648}{1377.175} \\
1024 & Mixed-Score  & \val{0.034}{0.005} & \val{0.542}{0.155} & \val{0.059}{0.010} & \val{0.995}{0.004} & \val{517.944}{1335.274} \\
1024 & BC-learner   & \val{0.187}{0.078} & \val{0.778}{0.059} & \val{0.257}{0.092} & \val{0.812}{0.147} & \val{1161.982}{1523.880} \\
1024 & OSL-MR       & \val{0.238}{0.094} & \val{0.809}{0.048} & \val{0.300}{0.072} & \val{0.769}{0.124} & \val{1386.078}{1646.711} \\
\bottomrule
\end{tabular}%
}
\end{table}

\noindent\textbf{Effect of the Mixed-Score prior.}
Removing the Mixed-Score prior from the learner's feature vector---the OSL-MR (w/o prior) ablation, which keeps the same online-safe evidence labels and architecture and differs only in the scalar prior term of \(h_{t,i}=[\psi_{t,i}^{on}; \mathrm{MS}_{t,i}]\)---consistently reduces precision and slightly reduces recall, and with them F1 and reward, while pushing budget occupancy up; the effect is consistent across all budgets and both benchmarks. For example, on LoCoMo at budget 128, precision falls from \(0.201\) to \(0.170\), recall from \(0.561\) to \(0.550\), and F1 from \(0.263\) to \(0.235\), while reward decreases from \(246.209\) to \(232.568\) and occupancy rises from \(0.828\) to \(0.868\). The advantage holds in every fold, confirming that the prior is a useful inductive bias for selection efficiency without adding supervision. The full ablation, together with the observability-leakage diagnostic that motivates the strict online/OAS separation, is reported in the online supplement (Section~\ref{ec:prior-ablation}, Table~\ref{tab:prior-ablation-combined}).

\subsection{Optimality Gap on Exactly-Solvable Instances}
\label{sec:optimality}

Our primary evaluation is on the LoCoMo and LongMemEval benchmarks (Section~\ref{sec:main-results}), where OSL-MR is the best-performing deployable method. Building on those results, we now ask a complementary and more refined question: in absolute terms, \emph{how close to optimal} is the learned policy, and how much does the \emph{multi-step} structure of the problem contribute to its advantage? These questions cannot be answered on the benchmarks themselves. Because retention is a genuinely multi-step problem (Section~\ref{sec:method}), the benchmark instances are long-horizon and the optimal policy is computationally intractable to obtain---the problem is NP-hard (Proposition~\ref{prop:nphard})---so there is no exact optimum on LoCoMo or LongMemEval to serve as a reference point. We therefore complement the main results with a controlled study on smaller, exactly-solvable instances, where the true optimum is available and the absolute quality of the learned policy can be measured directly.

Concretely, we construct small instances of the \emph{same} sequential decision process of Section~\ref{sec:method}---a per-step storage budget, evidence hits, misses served by a delayed and costly reacquisition channel, and a retained store carried over between steps ($C_t=A_t$)---cast as a finite Markov decision process (MDP) that is small enough to solve \emph{exactly} by dynamic programming (DP). On these instances we can compute both the true optimum $V^\star$ and the exact single-step optimum, which lets us analyze two things directly: (i) \emph{how much the multi-step structure matters}, by measuring how far even a \emph{perfect} single-step optimizer falls short of the optimum; and (ii) \emph{how near-optimal}, in absolute terms, our deployable learning-based policy is. Together these analyses provide direct, quantitative support for casting retention as a sequential problem and for approximating it by learning.

\noindent\textbf{Setup.}
Each instance has $n=6$ candidate memories, horizon $T=9$, and budget $B=3$, with a sticky candidate pool in which a dropped memory can re-enter only through the delayed, costly reacquisition channel of Section~\ref{sec:method}; per-step demand is either \emph{stationary} or \emph{shifts} between phases. We compare two groups of policies. \emph{Model-based references} know the MDP dynamics and are not deployable: \emph{Optimal (DP)}, the exact full-horizon optimum; and \emph{Lookahead-}$d$, a receding-horizon planner that runs the same DP truncated to a $d$-step window---$d{=}1$ is \emph{Myopic}, the exact single-step optimizer, and increasing $d$ ($2,4$) folds more of the future into the DP until it coincides with Optimal, isolating exactly how much pure planning depth is worth. \emph{Deployable policies} use only online-observable features and no planning: the \emph{Mixed-Score} greedy of Section~\ref{sec:mixed-score}, and \emph{OSL-MR (learned)}, a frozen counterpart of our evidence learner trained offline on a \emph{disjoint} set of instances under the online/OAS separation. For each policy $\pi$ we compute, exactly and without sampling, the expected cumulative return $V^\pi=\mathbb{E}_\pi[\sum_{t=1}^{T} r_t]$ and the optimality gap $\mathrm{gap}(\pi)=(V^\star-V^\pi)/|V^\star|$ relative to the DP optimum $V^\star=\max_\pi V^\pi$, each averaged over the $40$ test seeds (so the averaged \emph{Gap} need not equal the ratio of the averaged \emph{Return} values). The full instance generator, the rationale for this deliberately different comparison set, and the scope of the conclusions are detailed in the online supplement (Section~\ref{ec:optimality}). Measuring the gap in cumulative reward is the natural notion of optimality here, and because the reward ranking aligns with the evidence-quality metrics on the benchmarks (Section~\ref{sec:main-results}), a small reward gap also corresponds to near-best evidence retention.

\begin{table}[t]
\centering
\small
\caption{Exact optimality gaps on small, solvable instances of the retention MDP (Section~\ref{sec:method}), averaged over 40 held-out test seeds. Each of the three demand regimes occupies a pair of columns: \emph{Return} is the exact expected cumulative reward and \emph{Gap} the optimality gap $\,(V^\star-V^\pi)/|V^\star|\,$ to the optimum (lower is better; $0\%$ is optimal); each gap is computed per instance and then averaged over seeds, so it need not equal the ratio of the averaged \emph{Return} values. The methods form two groups: \emph{model-based references} (Optimal, Lookahead-$d$, Myopic) know the dynamics and are \emph{not} deployable, serving only to calibrate how much pure foresight is worth; \emph{deployable policies} (Mixed-Score, OSL-MR (learned)) use only online-observable features and no planning.}
\label{tab:synthetic-optimality}
\setlength{\tabcolsep}{5pt}
\begin{tabular}{l rr rr rr}
\toprule
 & \multicolumn{2}{c}{\textbf{Stationary}} & \multicolumn{2}{c}{\textbf{Shifting}} & \multicolumn{2}{c}{\textbf{Shifting + costly}} \\
\cmidrule(lr){2-3}\cmidrule(lr){4-5}\cmidrule(lr){6-7}
\textbf{Method} & \textbf{Return} & \textbf{Gap (\%)} & \textbf{Return} & \textbf{Gap (\%)} & \textbf{Return} & \textbf{Gap (\%)} \\
\midrule
\multicolumn{7}{l}{\textit{Model-based references (not deployable)}}\\
Optimal (DP)     & -12.00 & 0.0  & -11.13 & 0.0  & -27.03 & 0.0  \\
Lookahead-4      & -12.00 & 0.0  & -12.02 & 12.3 & -28.20 & 5.9  \\
Lookahead-2      & -12.00 & 0.0  & -12.22 & 14.6 & -28.69 & 7.5  \\
Myopic (1-step)  & -12.00 & 0.0  & -13.72 & 31.5 & -31.60 & 19.3 \\
\addlinespace
\multicolumn{7}{l}{\textit{Deployable (online features only)}}\\
Mixed-Score      & -12.26 & 2.0  & -12.09 & 11.3 & -28.64 & 6.6  \\
OSL-MR (learned) & -12.26 & 2.0  & -11.99 & 10.1 & -28.44 & 5.7  \\
\bottomrule
\end{tabular}
\end{table}

\noindent\textbf{A control: when the future does not matter, our approximation is essentially optimal.}
We first isolate the case in which the multi-step structure is absent. Under stationary demand the same memories are useful at every step, so retention decouples over time into a sequence of independent single-step problems. Here every method lands within about $2\%$ of the exact optimum: the planners are optimal, and both the deployable Mixed-Score greedy and the learned OSL-MR scorer attain a $2.0\%$ gap. This is exactly the behavior predicted by the submodular $(1-1/e)$ analysis (Remark~\ref{rem:greedy}), and it confirms that our greedy approximation loses essentially nothing when there is no temporal coupling to exploit. It also acts as a control for what follows: any larger gap we observe below must come from the multi-step structure itself, not from a weak heuristic.

\noindent\textbf{Multi-step matters: a perfect single-step optimizer provably fails.}
The situation changes markedly once demand shifts over time. A memory that appears irrelevant at the current step may be precisely what a later query needs, and once it has been dropped it can return only through the delayed, costly reacquisition channel. The optimal policy must therefore \emph{pre-load} such memories \emph{before} their demand rises---a decision whose benefit accrues only in the future and is therefore invisible to any per-step reward. The consequence is pronounced: the \emph{exact} single-step optimizer (Myopic), which maximizes the immediate reward perfectly at every step, is still $31.5\%$ away from the optimum under shifting demand and $19.3\%$ away when reacquisition is costly. Because Myopic is the \emph{best possible} one-step policy and not a weak baseline, this gap cannot be attributed to a poor heuristic---it is direct, quantitative evidence that retention does \emph{not} decompose into independent per-step problems. The multi-step formulation of Section~\ref{sec:method} is thus a necessity, not a modeling convenience.

\noindent\textbf{Learning to look ahead closes the gap, and does so best among deployable policies.}
What recovers this large gap is accounting for the future when scoring memories. Ranking memories by their \emph{long-run} rather than \emph{immediate} utility---exactly the inductive bias OSL-MR encodes through its demand-aware features and Mixed-Score prior---collapses the Myopic gap of $31.5\%/19.3\%$ down to roughly $6\%$--$11\%$. Among all deployable policies, the learned OSL-MR scorer comes closest to the optimum in every coupled regime: $10.1\%$ under shifting demand and $5.7\%$ under costly reacquisition, ahead of the hand-tuned Mixed-Score greedy ($11.3\%/6.6\%$) and on par with it when there is no temporal structure to exploit ($2.0\%$). Notably, OSL-MR matches---and slightly exceeds---explicit receding-horizon planning (Lookahead-4: $12.3\%/5.9\%$) while using \emph{no} planning and only online-observable features at inference; deeper lookahead narrows the gap only monotonically with depth and never overtakes the learned scorer. This indicates that the value of the multi-step view is realized not through expensive planning but by \emph{learning} which memories will matter later---precisely what OSL-MR is designed to do and precisely what a single-step objective cannot capture.

In short, the multi-step structure matters decisively---even the \emph{exact} single-step optimizer leaves a gap of roughly $19\%$--$32\%$ once demand shifts---while the deployable OSL-MR stays within about $2\%$ of the optimum when the future is irrelevant and about $6\%$--$10\%$ when it matters, the closest of all deployable policies and on par with model-based planners despite using \emph{no} planning. Because the optimum is unattainable at benchmark scale (NP-hard, Proposition~\ref{prop:nphard}), this controlled study supplies evidence the benchmarks cannot, justifying the learning-augmented approximation over an exact solver and corroborating OSL-MR's advantage on LoCoMo and LongMemEval.

\subsection{Robustness Analysis}
\label{sec:robustness}
We conduct a sensitivity analysis by varying one reward or cost coefficient at a time while keeping all other coefficients fixed at their default values.
The default coefficients are
$\alpha_{\mathrm{hit}}=4.0$,
$\alpha_{\mathrm{full}}=64.0$,
$\alpha_{\mathrm{miss}}=12.0$,
$\alpha_{\mathrm{reacq}}=6.0$,
and $\alpha_{\mathrm{stale}}=6.0$,
with the storage cost fixed to $\alpha_{\mathrm{store}}=1.0$.
For each coefficient, we apply a multiplier
$m \in \{0.25, 0.5, 0.75, 1.0, 1.5, 2.0, 4.0\}$
while leaving the others unchanged.
We report results under fixed memory budget 512 on LongMemEval here; results for other budgets (LoCoMo 32, 64, 128 and LongMemEval 256, 1024) are provided in the online supplement (Section~\ref{ec:robustness}).
As shown in the sensitivity curves (Fig.~\ref{fig:longmemeval-sensitivity_budget512}), OSL-MR remains the top-performing non-oracle method across all multipliers. For the reward coefficients $\alpha_{\mathrm{hit}}$ and $\alpha_{\mathrm{full}}$, its average reward increases or remains high as the corresponding reward becomes larger. For the penalty coefficients $\alpha_{\mathrm{miss}}$, $\alpha_{\mathrm{reacq}}$, and $\alpha_{\mathrm{stale}}$, the average reward decreases as expected, but OSL-MR consistently stays above the heuristic and learned baselines, showing robust performance under stronger cost penalties.

\begin{figure}[!t]
\centering
\includegraphics[width=\columnwidth]{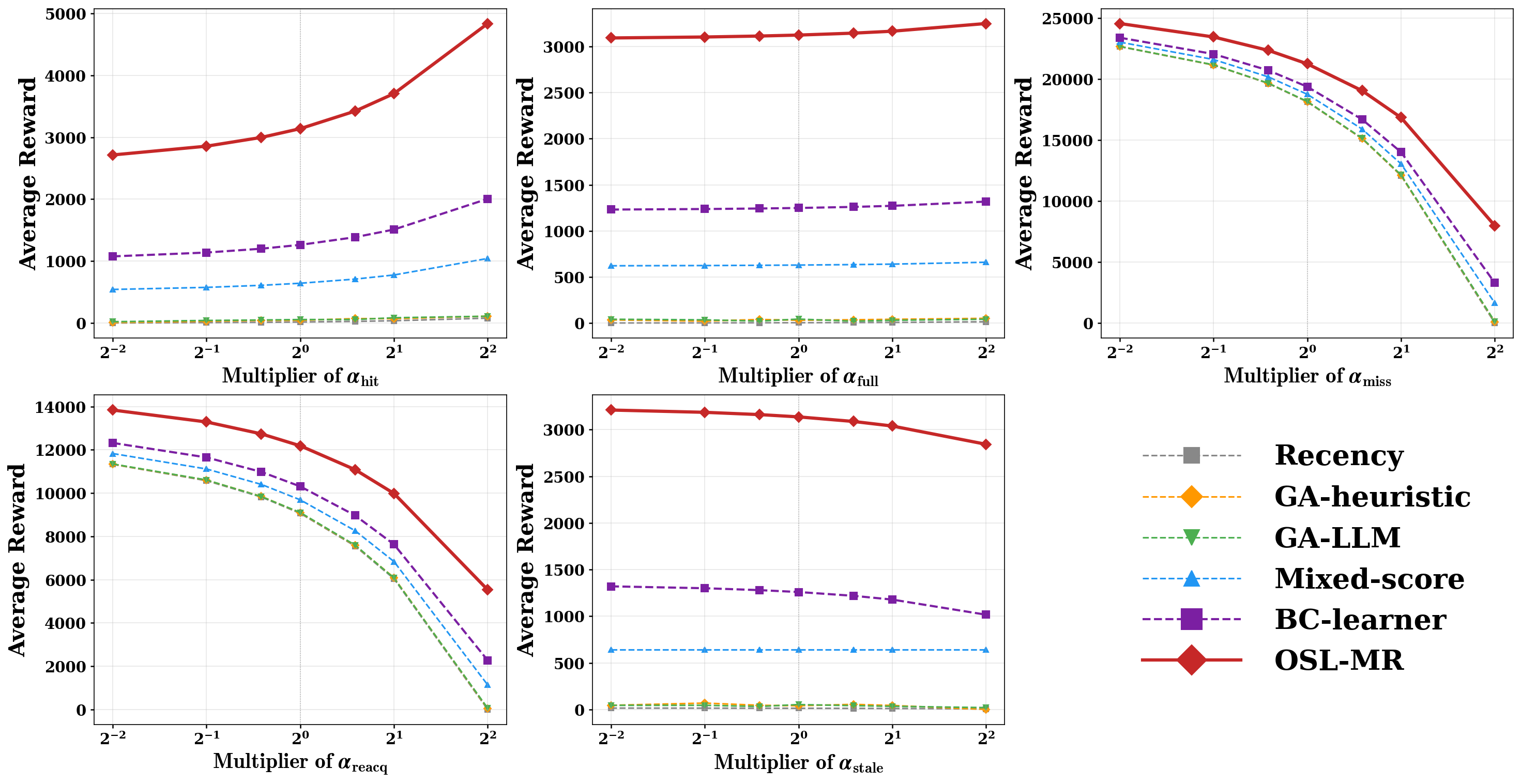}
\caption{Reward sensitivity on LongMemEval (budget=512) when varying a single coefficient. $\alpha_{\mathrm{store}}$ is fixed at $1.0$. Each curve changes one coefficient by multiplying its default value with $m \in \{0.25,0.5,0.75,1.0,1.5,2.0,4.0\}$, keeping all other coefficients at their defaults.}
\label{fig:longmemeval-sensitivity_budget512}
\end{figure}

\subsection{Summary of Findings}

In summary: (i) heuristics (Recency, Generative Agents, Mixed-Score) saturate the budget with low-utility items and attain poor evidence F1 under tight constraints; (ii) evidence-supervised learning is the primary driver, with OSL-MR consistently best across budgets and datasets and the largest gains under tight budgets; (iii) the Mixed-Score prior improves precision and modestly improves recall while reducing occupancy; (iv) direct evidence supervision outperforms behavior cloning; (v) reward rankings align with F1 and precision for all methods of practical interest, confirming that the objective captures genuine retention quality; and (vi) on exactly-solvable instances of the retention MDP, the \emph{exact} single-step optimizer leaves a substantial gap under shifting demand whereas OSL-MR stays close to the dynamic-programming optimum, evidencing the necessity of the sequential formulation and the near-optimality of our learning-based approximation. Together these findings validate OSL-MR as an online-safe evidence-learning framework for constrained memory retention.
\section{Conclusion and Discussion}
\label{sec:conclusion}

We have presented OSL-MR, a framework that treats memory retention in long-horizon language agents as a sequential decision problem under budget constraints, delayed feedback, and partial observability. Its central contribution is the formulation: to our knowledge, this is the first work to cast memory retention as a constrained stochastic optimization problem accounting for evidence utility, reacquisition cost, and stale risk, under a strict separation between online-observable features and offline-available supervision. Because the problem is NP-hard (Proposition~\ref{prop:nphard}) and partially observable, exact optimization is intractable, so we design a learning-based approximation that pairs a lightweight Mixed-Score heuristic (cold-start baseline and inductive prior) with an evidence learner trained from natural user-agent logs. On small, exactly-solvable instances of the same problem, this horizon-aware approximation is near-optimal under stationary demand and closes most of the optimality gap left by exact single-step optimization once demand shifts, achieving this through long-run utility scoring rather than expensive planning. Empirically, the learned policy consistently outperforms strong heuristics (recency, Generative Agents, and Mixed-Score itself) and behavior-cloning baselines on LoCoMo and LongMemEval, especially under tight budgets, with the Mixed-Score prior improving precision and modestly improving recall, and direct evidence supervision as the primary driver. The strict observability separation ensures that all reported gains reflect generalization under realistic information constraints.

Our framework complements recent advances in memory writing (Mem0), compression (MEM1), and retrieval-side optimization (MemRL): while those target other stages of the memory lifecycle, OSL-MR is the first to formalize and optimize retention under hard budgets and observability constraints, bridging constrained stochastic optimization and learning-based approximation. Future work includes automatically learning the Mixed-Score feature groups and weights, more principled freshness estimation beyond type-based decay, and scaling to embodied or tool-augmented agents via more efficient or amortized decision strategies.

% References here (outcomment the appropriate case)

% CASE 1: BiBTeX used to constantly update the references
%   (while the paper is being written).
%\bibliographystyle{informs2014} % outcomment this and next line in Case 1
%\bibliography{<your bib file(s)>} % if more than one, comma separated

%\bibliographystyle{informs2014} % outcomment this and next line in Case 1
%\bibliography{sample} % if more than one, comma separated

% CASE 2: BiBTeX used to generate mypaper.bbl (to be further fine tuned)
%\input{mypaper.bbl} % outcomment this line in Case 2

%If you don't use BiBTex, you can manually itemize references as shown below.

%\bibliographystyle{nonumber}
\bibliographystyle{informs2014}
\bibliography{06_references}

%%%%%%%%%%%%%%%%%%%%%%%%%%%%%%%%%%%%%%%%%%%%%%%%%%%%%%%%%%%%%%%%%%%%%%
% ONLINE SUPPLEMENT (E-COMPANION)
% Everything below \ECSwitch is the e-companion: it is page-numbered
% ec1, ec2, ... with figures and tables numbered EC.1, EC.2, ...
% The main paper (everything above) stays within the 25-page limit;
% for final submission this material is packaged as a separate
% online-supplement file, per IJOC submission guidelines.
%%%%%%%%%%%%%%%%%%%%%%%%%%%%%%%%%%%%%%%%%%%%%%%%%%%%%%%%%%%%%%%%%%%%%%
\ECSwitch
% Switch between the two e-companion versions here:
%   08_ecompanion             -> original, ~17 pages (kept for cross-checking)
%   08_ecompanion_compressed  -> compressed, <=10 pages (submission target)
% =====================================================================
%  ONLINE SUPPLEMENT (E-COMPANION) -- COMPRESSED VERSION (target <= 10 pages)
%  This is a length-reduced counterpart of 08_ecompanion.tex. It preserves
%  ALL labels, tables, the algorithm, the NP-hardness proof, and every
%  numerical value; only prose is tightened and the five sensitivity
%  figures are combined into one subfigure grid. The original 17-page
%  08_ecompanion.tex is kept unchanged for cross-checking.
% =====================================================================

\ECHead{E-Companion}

% Slightly reduced body font for the online supplement to keep it within the
% page guideline; section headings keep their normal size. This applies to the
% rest of the document (the e-companion is the final included file).
\small

% ---------------------------------------------------------------------
\section{Evaluation Metrics, Protocol, and Training Details}
\label{ec:metrics}

This section gives the full definitions of the evaluation metrics, the cross-validation protocol, the training hyperparameters, and the offline LLM-annotation setup summarized in Section~\ref{sec:eval-protocol}.

\paragraph{Shifted reward.}
We report the per-step reward of Section~\ref{sec:method} averaged over an episode's queries, after a global shift for readability: $\mathrm{shifted\_reward} = \mathrm{raw\_reward} + C$, where $C = 476.136$ on LoCoMo and $C = 5211.758$ on LongMemEval. The shift is purely cosmetic and does not affect any relative comparison; the minimum shifted reward in each benchmark is zero, corresponding to the worst-performing method under that configuration.

\paragraph{Evidence precision, recall, and F1.}
For each query $t$, let $G_t$ be the set of gold evidence dialogue IDs and $R_t$ the set of retained dialogue IDs under the budget. We compute
\[
\mathrm{Precision}_t = \frac{|R_t \cap G_t|}{\max(|R_t|,1)},
\qquad
\mathrm{Recall}_t = \frac{|R_t \cap G_t|}{\max(|G_t|,1)},
\qquad
\mathrm{F1}_t = \frac{2\,\mathrm{Precision}_t\,\mathrm{Recall}_t}{\mathrm{Precision}_t+\mathrm{Recall}_t},
\]
with $\mathrm{F1}_t=0$ when $\mathrm{Precision}_t+\mathrm{Recall}_t=0$, and all metrics macro-averaged over queries. Precision measures budget efficiency: high precision means the policy is not wasting capacity on irrelevant memories. Recall measures evidence coverage: high recall means most required information is retained. In budgeted retention, these two objectives are in tension, and F1 captures their balance. These set-level metrics aggregate over dialogue IDs, whereas the reward of Section~\ref{sec:method} aggregates token-weighted coverage through $\mathrm{cov}(i)$; the two are \emph{empirically} aligned (Section~\ref{sec:main-results}) rather than identical by construction, and the low absolute precision reflects that gold evidence occupies only a small fraction of the budgeted IDs.

\paragraph{Budget occupancy.}
Budget occupancy measures the fraction of the available memory budget consumed by the retained subset, $\mathrm{Occupancy}_t = \frac{\sum_{i\in A_t} s_{t,i}}{\max(B_t,1)}$, macro-averaged over queries. High occupancy is common for static scoring rules that greedily fill the budget, but does not necessarily indicate better retention quality. Learned policies that stop early when no remaining candidate appears useful may yield lower occupancy while achieving higher precision and reward. Occupancy should therefore be interpreted together with precision and recall, as a measure of budget efficiency rather than of evidence correctness.

\paragraph{Cross-validation protocol.}
To ensure a fair comparison and avoid overfitting, we adopt different evaluation protocols for the learned policies and the heuristic baseline. For learned policies (BC-learner and OSL-MR), we split episodes into five folds: in each run, four folds are used for training and validation and the held-out fold is used for testing, with early stopping on a validation subset drawn only from the training folds. We repeat the process with three random seeds and report the mean and standard deviation over all held-out-fold evaluations. The heuristic Mixed-Score has only a small number of hyperparameters (feature weights, decay rates, etc.), which are calibrated on the training set by grid search without any cross-validation, because the heuristic is deterministic and does not fit to noise; the calibrated parameters are then evaluated on the test set. For the learned policies we apply strict safeguards---cross-validation for hyperparameter selection, early stopping, and $\ell_2$ regularization. Unless otherwise noted, every entry in the results tables is the mean over these runs, and the value in parentheses is the corresponding standard deviation across runs.

\paragraph{Training hyperparameters and cost-coefficient rationale.}
The evidence learner is a single-hidden-layer MLP (16 hidden units, sigmoid output) trained for 12 epochs (learning rate $0.03$, $\ell_2$ regularization $10^{-4}$, positive class weight 4.0, negative down-sampling multiplier 6). Default reward coefficients use storage cost as the unit baseline: $\alpha_{\mathrm{store}}=1.0$, $\alpha_{\mathrm{hit}}=4.0$, $\alpha_{\mathrm{reacq}}=6.0$, $\alpha_{\mathrm{miss}}=12.0$, $\alpha_{\mathrm{stale}}=6.0$, $\alpha_{\mathrm{full}}=64.0$. The miss penalty is the highest because failing to answer is the most harmful outcome; the reacquisition and stale penalties are moderate, reflecting the cost of re-searching and the risk of outdated information; the hit reward is lower; and the full-coverage bonus strongly encourages complete evidence retention. Budgets reflect average evidence length: for LoCoMo ($\approx$60 tokens per query) we use 32, 64, and 128 tokens, and for LongMemEval ($\approx$587 tokens per query) we use 256, 512, and 1024 tokens, covering both under- and over-provisioned regimes. The sensitivity analysis (Section~\ref{sec:robustness}) varies each coefficient individually over a multiplier $m \in \{0.25,0.5,0.75,1.0,1.5,2.0,4.0\}$, confirming that the learning-based approximation is robust to the coefficient scales. These coefficients govern the full sequential model and the optimality study.

\paragraph{Offline LLM annotation.}
We use LLMs only for offline annotation (DeepSeek-V4-Flash, temperature 0.0, JSON prompts, three-turn context, with output limits of 20 tokens for memory type and 24 for importance). The importance scores provide the salience signal for the GA-LLM baseline, while the memory-type labels are used to derive the freshness proxies described above. All annotations are cached in advance, so no LLM calls are made during evaluation. The full prompt templates are provided in Section~\ref{app:llm-prompts}.

% ---------------------------------------------------------------------
\section{Detailed Descriptions of Compared Methods}
\label{ec:baselines}

\begin{itemize}\setlength\itemsep{2pt}
    \item \textbf{Recency}: a practical and widely used heuristic that ranks memories by most-recent access (ties broken by item value) and greedily admits them in that order under the size-capacity budget $B$, skipping any item that does not fit. It is simple, computationally efficient, and commonly employed as a baseline.
    \item \textbf{Generative Agents (GA)}: a widely adopted retrieval paradigm that scores each memory as a weighted sum of recency, relevance, and importance \citep{park2023generative}. We evaluate two importance variants: GA-heuristic (hand-crafted rules based on memory content, e.g., higher weight for emotional events) and GA-LLM (importance scores queried from an LLM offline); both variants share the same recency and relevance computation.
    \item \textbf{Mixed-Score}: our online-safe heuristic prior. It computes a utility score for each memory by combining multiple observable feature groups (temporal, relevance, context, risk) with a size penalty term, and selects the top-scoring memories under the budget. Its answer-overlap weight is fixed to $w_a=0$, so it relies only on online-observable signals and is fully deployable; this same online-safe form serves as the cold-start policy ($\mathrm{MS}^0$) and as the learner's inductive prior ($\mathrm{MS}^{cal,0}$). An information-favored variant that frees $w_a$ at test is reported only as a reference (Section~\ref{ec:prior-ablation}, Table~\ref{tab:mixed-wa-reference}).
    \item \textbf{BC-learner}: a behavior-cloning baseline trained to imitate the retention decisions of a non-deployable teacher policy. The teacher performs local greedy search around the Mixed-Score selection using oracle gold evidence (OAS) to produce higher-quality retained sets. BC-learner uses the same architecture as OSL-MR but is supervised by teacher pseudo-labels instead of gold evidence.
    \item \textbf{OSL-MR (full)}: our complete method, which trains an evidence learner from logged interaction data using gold evidence labels. The learner takes online-observable features concatenated with the Mixed-Score prior as input and outputs a utility logit per memory, and a budget-constrained decoder selects the retained subset at inference. The learned policy is frozen after training and uses no OAS at test time.
    \item \textbf{OSL-MR (w/o prior)}: an ablation variant that removes the Mixed-Score prior from the feature vector, using only online-observable features $\psi^{on}$. All other settings (training data, architecture, loss) remain identical, isolating the contribution of the prior.
\end{itemize}

\noindent\textbf{Note on single-step methods.}
Fofadiya and Tiwari~\citep{fofadiya2026forgetting} formulate single-step budgeted optimization that maximizes immediate relevance per step, but they ignore delayed consequences (miss, reacquisition, staleness) and partial observability, and their code is not publicly available; hence we do not include them as a quantitative baseline. Instead, our experiments include strong local heuristics (Recency, GA, Mixed-Score) that act as myopic, per-step decision policies; crucially, they are evaluated under the same sequential protocol (Section~\ref{sec:eval-protocol}), so the comparison isolates the value of horizon-aware retention while keeping the evaluation dynamics identical across methods. In addition, we do not rely on these heuristics alone to characterize single-step optimization: in the controlled study of Section~\ref{sec:optimality} we evaluate the \emph{exact} single-step optimizer (Myopic)---the strongest policy any single-step method can attain---and find that it leaves a large optimality gap ($19\%$--$32\%$) once demand is temporally coupled. This indicates that the limitation of single-step retention is intrinsic to the objective itself rather than specific to any one heuristic or implementation.

% ---------------------------------------------------------------------
\section{Mixed-Score Feature Definitions and Calibration}
\label{ec:mixed-score-features}

The terms of the Mixed-Score utility $\mathrm{MS}_{t,i}$ in Section~\ref{sec:mixed-score} are:
\begin{itemize}\setlength\itemsep{1pt}
    \item \(\mathrm{Rec}_{t,i} := \frac{t_i^{\mathrm{store}} + 1}{t + 1}\): recency, where \(t_i^{\mathrm{store}}\) is the creation step (indices from 0) and \(t\) the current step, so a memory created now (\(t_i^{\mathrm{store}}=t\)) scores $1$ and older ones less.
    \item \(\mathrm{Qov}_{t,i} := \frac{|T_{q,t} \cap T_{m_i}|}{|T_{q,t}|}\): question overlap, the token overlap between the current question \(T_{q,t}\) and the memory \(T_{m_i}\).
    \item \(\mathrm{Aov}_{t,i} := \frac{|T_{a,t} \cap T_{m_i}|}{|T_{a,t}|}\): overlap between memory and gold answer. It requires the answer and is therefore an offline-available supervision (OAS) signal; every online policy in this paper---\(\mathrm{MS}^0\), the prior \(\mathrm{MS}^{cal,0}\), the Mixed-Score baseline in the main results, and OSL-MR---sets \(w_a=0\) and uses only online-observable signals (Section~\ref{sec:evidence-learning}). An information-favored variant that frees \(w_a\) at test (hence not deployable) is reported only as a reference in Section~\ref{ec:prior-ablation}.
    \item \(\mathrm{Ent}_{t,i} := \frac{|E_{q,t} \cap E_{m_i}|}{|E_{q,t}|}\): entity overlap between question entity spans \(E_{q,t}\) and memory entity spans \(E_{m_i}\).
    \item \(\mathrm{Ses}_{t,i} := \mathbf{1}[\text{session}(m_i) = \max_{j \in M_t} \text{session}(m_j)]\): indicator that the memory is from the most recent session.
    \item \(\mathrm{Spk}_{t,i} := \mathbf{1}[\text{speaker}(m_i) = \text{user}]\): indicator that the memory was generated by the user.
    \item Budget \(B_t\) is the token budget and \(s_{t,i}\) the token size of memory $i$; \(\mathbf{w} = (w_r,w_q,w_a,w_e,w_s,w_u,w_c)\) are non-negative coefficients tuned on the development split (Section~\ref{app:freshness-params}).
\end{itemize}

% ---------------------------------------------------------------------
\section{OSL-MR Training and Deployment Algorithm}
\label{ec:algorithm}

Algorithm~\ref{alg:framework} summarizes the cold-start logging, offline calibration and supervision, and frozen-deployment stages of OSL-MR (Section~\ref{sec:evidence-learning}).

\begin{algorithm}[t]
\footnotesize
\caption{OSL-MR: Training and Frozen Deployment}
\label{alg:framework}
\begin{algorithmic}[1]
\Require \begin{minipage}[t]{0.86\linewidth}
Training environments \(\mathcal{E}\), budgets \(B_t\), online features \(\psi^{on}\), cold-start Mixed-Score \(\mathrm{MS}^{0}\) with \(w_a=0\),\\\ selection function \(\mathrm{Select}\)
\end{minipage}
\vspace{7pt}
\Ensure Frozen learned policy \(f_\theta\)
\State \(\mathcal{D}\gets\emptyset\)

\Statex \textbf{Cold-start logging.}
\ForAll{episode \(\in\mathcal{E}\), step \(t\)}
    \State Observe \(Q_t,M_t\)
    \State Compute \(\mathrm{MS}^{0}_{t,i}\) for all \(m_{t,i}\in M_t\) \hfill // \(w_a=0\); no answer feature online
    \State Execute \(A_t=\mathrm{Select}(\mathrm{MS}^{0}_t,B_t)\) and log \((Q_t,M_t,\psi^{on}_t)\)
\EndFor

\Statex \textbf{Offline calibration and supervision.}
\State Calibrate Mixed-Score offline  \hfill // answer features allowed only offline
\State Construct \(\mathrm{MS}^{cal,0}\) by setting \(w_a=0\)
\ForAll{logged \((Q_t,M_t,\psi^{on}_t)\)}
    \State Compute \(\mathrm{MS}^{cal,0}_{t,i}\) for all \(m_{t,i}\in M_t\) \hfill // no answer feature used in learner input
    \State Set \(h_{t,i}=[\psi^{on}_{t,i};\mathrm{MS}^{cal,0}_{t,i}]\)
    \State Obtain \(E_t\) and \(\mathrm{cov}(i)\) for all \(m_{t,i}\in M_t\) \hfill // offline/OAS only
    \State Get \(y^{\mathrm{evid}}_{t,i}
    =
    \mathbf{1}\!\left[\mathrm{cov}(i)\cap E_t\neq\emptyset\right]\) \hfill // evidence-support label
    \State \(\mathcal{D}\gets\mathcal{D}\cup\{(h_t,y^{\mathrm{evid}}_t)\}\)
\EndFor

\Statex \textbf{Training and frozen deployment.}
\State Train \(f_\theta\) on \(\mathcal{D}\) by solving \(\min_\theta \sum_{t,i}\omega_{t,i}\,\mathrm{BCE}\bigl(\sigma(f_\theta(h_{t,i})),y_{t,i}^{\mathrm{evid}}\bigr)\)
\State Freeze \(f_\theta\); deploy with \(h_{t,i}=[\psi^{on}_{t,i};\mathrm{MS}^{cal,0}_{t,i}]\)\hfill // no answer feature; no online update
\State Select \(A_t=\mathrm{Select}(f_\theta(h_t),B_t)\)

\State \Return \(f_\theta\)
\end{algorithmic}
\end{algorithm}

% ---------------------------------------------------------------------
\section{Mixed-Score Prior Ablation}
\label{ec:prior-ablation}

To isolate the prior's contribution, we compare full OSL-MR against a variant that removes the Mixed-Score prior (OSL-MR (w/o prior)); both use the same online-safe evidence labels and architecture, differing only in the scalar prior in \(h_{t,i}=[\psi_{t,i}^{on}; \mathrm{MS}_{t,i}]\). Removing the prior consistently lowers precision and slightly lowers recall, and with them F1 and reward, while raising budget occupancy across all budgets and both benchmarks (Table~\ref{tab:prior-ablation-combined}). For example, on LongMemEval at budget 256, precision falls from $0.382$ to $0.359$, recall from $0.617$ to $0.608$, and F1 from $0.401$ to $0.382$, while reward decreases from $1347.546$ to $1317.028$ and occupancy rises from $0.768$ to $0.794$. Full OSL-MR has higher precision and F1 in each of the five folds; we report means and omit per-run deviations.

\begin{table}[htbp]
\centering
\footnotesize
\caption{Mixed-score prior ablation across benchmarks. Rewards are shifted by $C = 476.136$ (LoCoMo) and $C = 5211.758$ (LongMemEval). Each entry is the mean over the cross-validation runs.}
\label{tab:prior-ablation-combined}
\newcommand{\val}[2]{#1}
\begin{tabular}{rlrrrrr}
\toprule
\textbf{Dataset} & \textbf{Method} & \textbf{Precision} & \textbf{Recall} & \textbf{F1} & \textbf{Occupancy} & \textbf{Reward} \\
\midrule
\multirow{11}{*}{LoCoMo} 
& \multicolumn{5}{c}{\textit{Budget 32}} \\
& OSL-MR 
& \val{0.123}{0.023} & \val{0.143}{0.034} & \val{0.123}{0.027} & \val{0.665}{0.075} & \val{121.458}{57.386} \\
& OSL-MR (w/o prior) 
& \val{0.096}{0.020} & \val{0.134}{0.038} & \val{0.103}{0.025} & \val{0.838}{0.040} & \val{112.515}{59.790} \\[4pt]
& \multicolumn{5}{c}{\textit{Budget 64}} \\
& OSL-MR 
& \val{0.216}{0.058} & \val{0.347}{0.015} & \val{0.239}{0.038} & \val{0.794}{0.060} & \val{194.796}{38.486} \\
& OSL-MR (w/o prior) 
& \val{0.186}{0.059} & \val{0.333}{0.022} & \val{0.213}{0.040} & \val{0.837}{0.068} & \val{182.823}{39.750} \\[4pt]
& \multicolumn{5}{c}{\textit{Budget 128}} \\
& OSL-MR 
& \val{0.201}{0.067} & \val{0.561}{0.086} & \val{0.263}{0.068} & \val{0.828}{0.070} & \val{246.209}{90.323} \\
& OSL-MR (w/o prior) 
& \val{0.170}{0.067} & \val{0.550}{0.090} & \val{0.235}{0.074} & \val{0.868}{0.074} & \val{232.568}{97.879} \\
\midrule
\multirow{11}{*}{LongMemEval} 
& \multicolumn{5}{c}{\textit{Budget 256}} \\
& OSL-MR 
& \val{0.382}{0.125} & \val{0.617}{0.086} & \val{0.401}{0.093} & \val{0.768}{0.137} & \val{1347.546}{1331.998} \\
& OSL-MR (w/o prior) 
& \val{0.359}{0.120} & \val{0.608}{0.087} & \val{0.382}{0.092} & \val{0.794}{0.134} & \val{1317.028}{1330.361} \\[4pt]
& \multicolumn{5}{c}{\textit{Budget 512}} \\
& OSL-MR 
& \val{0.224}{0.066} & \val{0.739}{0.038} & \val{0.292}{0.061} & \val{0.870}{0.130} & \val{1362.376}{1429.565} \\
& OSL-MR (w/o prior) 
& \val{0.211}{0.059} & \val{0.731}{0.041} & \val{0.280}{0.060} & \val{0.890}{0.112} & \val{1338.750}{1425.285} \\[4pt]
& \multicolumn{5}{c}{\textit{Budget 1024}} \\
& OSL-MR 
& \val{0.238}{0.094} & \val{0.809}{0.048} & \val{0.300}{0.072} & \val{0.769}{0.124} & \val{1386.078}{1646.711} \\
& OSL-MR (w/o prior) 
& \val{0.222}{0.101} & \val{0.798}{0.045} & \val{0.280}{0.076} & \val{0.802}{0.133} & \val{1301.674}{1604.608} \\
\bottomrule
\end{tabular}
\end{table}

\paragraph{Information-favored Mixed-Score reference ($w_a\neq0$).}
The deployable Mixed-Score baseline fixes $w_a=0$ and uses only online features. To probe how much an answer-overlap signal could help a strong heuristic, we also evaluate an \emph{information-favored} variant that frees $w_a$ during offline calibration on the training split and retains the calibrated (possibly nonzero) weight at test; computing the answer-overlap feature then requires the gold answer, so this variant is \emph{not} deployable and is reported only as a reference. Table~\ref{tab:mixed-wa-reference} contrasts the two. The answer-overlap signal yields only a small gain: on LoCoMo at budget 128, it improves F1 from $0.087$ to $0.095$, and the shifted reward from $120.476$ to $137.964$; on LongMemEval at budget 1024, the corresponding gains are F1 from $0.059$ to $0.061$, and shifted reward from $517.944$ to $570.358$. Even so, it remains far below OSL-MR, whose F1 reaches $0.263$ on LoCoMo at budget 128, and $0.300$ on LongMemEval at budget 1024, while using no answer information. The main-paper conclusions are thus conservative: OSL-MR surpasses the heuristic even when the latter is granted an offline-calibrated answer-overlap advantage unavailable to any deployed system.

\begin{table}[htbp]
\centering
\footnotesize
\caption{Information-favored Mixed-Score reference. ``Deployable ($w_a{=}0$)'' is the online-safe Mixed-Score used in the main results; ``Info-favored ($w_a{\neq}0$)'' frees the answer-overlap weight during offline calibration and retains it at test (requiring the gold answer, hence not deployable). Rewards are shifted by $C=476.136$ (LoCoMo) and $C=5211.758$ (LongMemEval), matching the main tables. Mean over cross-validation folds (std in parentheses).}
\label{tab:mixed-wa-reference}
\newcommand{\val}[2]{#1\,\ensuremath{{\scriptstyle(\pm#2)}}}
\begin{tabular}{rlrrrrr}
\toprule
\textbf{Budget} & \textbf{Mixed-Score} & \textbf{Precision} & \textbf{Recall} & \textbf{F1} & \textbf{Occupancy} & \textbf{Reward} \\
\midrule
\multicolumn{7}{l}{\textit{LoCoMo}} \\
32  & Deployable ($w_a{=}0$)      & \val{0.043}{0.022} & \val{0.104}{0.036} & \val{0.057}{0.026} & \val{0.965}{0.016} & \val{94.729}{61.989} \\
32  & Info-favored ($w_a{\neq}0$) & \val{0.050}{0.017} & \val{0.120}{0.031} & \val{0.066}{0.021} & \val{0.966}{0.014} & \val{99.850}{61.893} \\
64  & Deployable ($w_a{=}0$)      & \val{0.049}{0.017} & \val{0.221}{0.052} & \val{0.077}{0.025} & \val{0.982}{0.008} & \val{104.971}{68.844} \\
64  & Info-favored ($w_a{\neq}0$) & \val{0.055}{0.018} & \val{0.249}{0.055} & \val{0.087}{0.026} & \val{0.982}{0.008} & \val{115.342}{69.807} \\
128 & Deployable ($w_a{=}0$)      & \val{0.051}{0.012} & \val{0.401}{0.052} & \val{0.087}{0.020} & \val{0.988}{0.005} & \val{120.476}{68.554} \\
128 & Info-favored ($w_a{\neq}0$) & \val{0.055}{0.013} & \val{0.440}{0.052} & \val{0.095}{0.020} & \val{0.987}{0.006} & \val{137.964}{69.602} \\
\midrule
\multicolumn{7}{l}{\textit{LongMemEval}} \\
256  & Deployable ($w_a{=}0$)      & \val{0.048}{0.010} & \val{0.319}{0.138} & \val{0.073}{0.018} & \val{0.993}{0.002} & \val{817.270}{1348.513} \\
256  & Info-favored ($w_a{\neq}0$) & \val{0.049}{0.011} & \val{0.332}{0.146} & \val{0.075}{0.020} & \val{0.992}{0.002} & \val{833.750}{1352.168} \\
512  & Deployable ($w_a{=}0$)      & \val{0.042}{0.008} & \val{0.452}{0.167} & \val{0.070}{0.015} & \val{0.995}{0.002} & \val{788.716}{1355.061} \\
512  & Info-favored ($w_a{\neq}0$) & \val{0.043}{0.008} & \val{0.458}{0.158} & \val{0.071}{0.014} & \val{0.994}{0.003} & \val{801.788}{1358.805} \\
1024 & Deployable ($w_a{=}0$)      & \val{0.034}{0.005} & \val{0.542}{0.155} & \val{0.059}{0.010} & \val{0.995}{0.004} & \val{517.944}{1335.274} \\
1024 & Info-favored ($w_a{\neq}0$) & \val{0.036}{0.004} & \val{0.570}{0.131} & \val{0.061}{0.008} & \val{0.995}{0.004} & \val{570.358}{1326.036} \\
\bottomrule
\end{tabular}
\end{table}

% ---------------------------------------------------------------------
\section{Optimality Study: Full Setup and Analysis}
\label{ec:optimality}

This section provides the full setup and supporting discussion for the controlled optimality study of Section~\ref{sec:optimality}.

\noindent\textbf{Setup.}
Each instance has $n=6$ candidate memories, horizon $T=9$, and budget $B=3$; item values, sizes, demand, and reacquisition costs are drawn from the instance generator specified below, with per-step demand either \emph{stationary} or \emph{shifting} across phases. The candidate pool is sticky: a dropped memory can re-enter only through the delayed, costly reacquisition channel of Section~\ref{sec:method}. All policies observe only the current-step demand at inference and fall into two classes:
\begin{itemize}\setlength\itemsep{2pt}
\item \emph{Optimal (DP)}: the exact MDP optimum by full-horizon backward induction, enumerating budget-feasible actions and taking expectations over demand.
\item \emph{Lookahead-}$d$: a receding-horizon planner running the same DP truncated to a $d$-step window (zero terminal value), committing the first action and re-planning. Depth $d$ is the only thing that varies: $d{=}1$ is \emph{Myopic}, the exact single-step optimizer; larger $d$ ($2,4$) folds more future into the DP; at full horizon it coincides with Optimal. The ordered sequence Myopic, Lookahead-2, Lookahead-4, and Optimal is thus a single DP run at increasing foresight, isolating the value of planning depth.
\item \emph{Mixed-Score}: the deployable greedy of Section~\ref{sec:mixed-score}, ranking by online cost-penalized utility from the current step only; no planning.
\item \emph{OSL-MR (learned)}: a lightweight counterpart of our evidence learner, scoring from online features plus the Mixed-Score prior, trained offline on evidence-derived supervision, then frozen and deployed with budget-aware greedy selection (online features only, no planning).
\end{itemize}
The depth $d$ governs only how each planner \emph{decides}; the \emph{expected return} reported for every policy is computed exactly by full-horizon DP policy evaluation (no Monte-Carlo), so all numbers are directly comparable to $V^\star$.

\noindent\textbf{Rationale for the comparison set.}
The comparison set differs from the benchmark study (Section~\ref{sec:main-results}) because this study needs reference points available \emph{only} at small scale: the exact optimum $V^\star$ and the lookahead spectrum from the exact single-step optimizer up to deeper planners. The textual baselines (Recency, Generative Agents, BC-learner) have no counterpart in this synthetic numerical MDP and their roles are already settled on the real benchmarks; we keep Mixed-Score as the deployable greedy that OSL-MR generalizes.

\noindent\textbf{Instances: generator and scope.}
Concretely, each instance is one episode of horizon $T{=}9$ over a fixed set of $n{=}6$ memory items: every turn queries a single item (that turn's evidence demand), the agent retains at most $B{=}3$ units, and an item that was evicted but is queried later returns only after a $\delta$-step delay at cost $\kappa s_i$. Items have integer sizes $s_i\sim\mathrm{Unif}\{1,2,3\}$ and values $v_i\sim\mathrm{Unif}[0.5,1.5]$; the per-unit storage cost is $0.02$, the miss penalty is $1.0$, and a hit yields the item's value. Each query is drawn from a categorical demand with $P$ topic phases, each concentrating probability $0.85$ on a distinct block of $\lceil n/P\rceil$ items (each item's probability then perturbed by $\pm10\%$ per seed and renormalized) and advancing every $\lfloor T/P\rfloor$ turns; e.g., in the \emph{shifting} regime ($P{=}2$) the queried evidence comes from items $1$--$3$ early in the episode and items $4$--$6$ later, so the optimum must \emph{pre-load} the second block before the shift, whereas the \emph{stationary} regime ($P{=}1$) keeps one block throughout. The three regimes are \emph{Stationary} $(P{=}1,\delta{=}1,\kappa{=}1)$, \emph{Shifting} $(P{=}2,\delta{=}2,\kappa{=}1)$, and \emph{Shifting${}+{}$costly} $(P{=}2,\delta{=}2,\kappa{=}2.5)$; the sizes $n{=}6$, $T{=}9$ are kept small so the full-horizon optimum is exactly computable while still admitting two three-item topic blocks. These instances are not an arbitrary toy model: they instantiate the \emph{same} reward components, per-step budget, and state carry-over ($C_t=A_t$) as the full retention problem of Section~\ref{sec:method}, scaled down only enough that the optimum becomes computable. We keep the conclusions \emph{qualitative and structural}---a perfect single-step optimizer is provably suboptimal and the learned policy stays close to the true optimum---rather than transferring the specific numerical gaps to benchmark scale. Each instance is evaluated exactly by DP (no Monte-Carlo); all reported values are means over $40$ held-out test seeds (the learner trained offline on $80$ disjoint seeds), with a stationary control in which the multi-step effect is, by construction, absent.

\noindent\textbf{Learned policy.}
The learned policy is the benchmark-scale OSL-MR evidence learner: a single-hidden-layer MLP trained offline on disjoint instances and deployed frozen, with the Mixed-Score prior supplied as an input feature alongside online-observable features. The only specialization to this synthetic MDP is the form of those online features, namely the decision-time state descriptors the generator exposes: the current phase, the fractional position within it, and the item's block.

% ---------------------------------------------------------------------
\section{Computational Complexity of the Retention Problem}
\label{app:complexity}

We formally establish that the constrained memory retention problem of Section~\ref{sec:method} is NP-hard. The hardness arises already in a single decision step, from the combinatorial token-weighted hit term that rewards covering an evidence demand set through the union of retained memories' coverage sets under a storage budget.

\begin{definition}[Single-step retention, decision version]
\label{def:retention-decision}
Given a memory pool $M=\{1,\dots,n\}$ in which each memory $i$ has an integer size $s_i$ and covers an evidence subset $\mathrm{cov}(i)\subseteq U$ over a weighted evidence universe $U$ with weights $\tau(e)\ge 0$, a storage budget $B$, and a target value $W$, decide whether there exists a retained set $A\subseteq M$ with $\sum_{i\in A}s_i\le B$ such that $\sum_{e\,\in\,\bigcup_{i\in A}\mathrm{cov}(i)}\tau(e)\ge W$.
\end{definition}

\begin{proposition}
\label{prop:nphard}
The single-step retention decision problem in Definition~\ref{def:retention-decision} is NP-hard. Consequently, the multi-step constrained retention problem of Section~\ref{sec:method}, which contains the single-step problem as the special case $T=1$, is also NP-hard. The hardness also holds for the partially observable formulation because the fully observed single-step instance is a special case.
\end{proposition}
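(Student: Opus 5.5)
The plan is a polynomial-time many-one reduction from the decision version of budgeted maximum coverage, which is NP-hard \citep{khuller1999budgeted, feige1998threshold}. An instance of that problem consists of a ground set $U$ with nonnegative weights $w(e)$, a family of subsets $S_1,\dots,S_n\subseteq U$ with integer costs $c_1,\dots,c_n$, a budget $L$, and a target $K$. The question is whether some subfamily of total cost at most $L$ covers weight at least $K$. Its unit-cost special case is maximum coverage, which is NP-hard by reduction from Set Cover: choose $K=|U|$, unit weights and $L=k$. If it is simpler to cite a standard result, I will reduce from this special case.

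The construction copies the instance directly. Take $M=\{1,\dots,n\}$, set $\mathrm{cov}(i)=S_i$, $s_i=c_i$, $\tau(e)=w(e)$, $B=L$ and $W=K$. This takes linear time, and the numbers are copied without change, so the encoding length is preserved. Correctness is immediate. A retained set $A$ satisfies $\sum_{i\in A}s_i\le B$ exactly when the corresponding subfamily has cost at most $L$. The covered weight $\sum_{e\in\bigcup_{i\in A}\mathrm{cov}(i)}\tau(e)$ equals the coverage weight of that subfamily. So yes-instances correspond to yes-instances in both directions. This proves the first claim of Proposition~\ref{prop:nphard}.

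For the multi-step claim, I would embed the single-step instance into the model of Section~\ref{sec:formulation} with $T=1$ and $C_0=\emptyset$. Let $N_1=M$ and let the demand be deterministic, $E_1=U$, so the expectation is trivial. Set $\alpha_{\mathrm{hit}}=1$ and all other coefficients to zero: $\alpha_{\mathrm{full}}=\alpha_{\mathrm{store}}=\alpha_{\mathrm{miss}}=\alpha_{\mathrm{reacq}}=\alpha_{\mathrm{stale}}=0$. Then $r_1=T_1^{\mathrm{hit}}$, and since $E_1=U$ we have $\mathrm{Cov}_1(A)=\bigcup_{i\in A}\mathrm{cov}(i)$. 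Hence the optimal expected return is at least $W$ exactly when the single-step instance is a yes-instance.

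The main obstacle is a modeling subtlety: the coefficients might be regarded as fixed, for example at positive defaults. I would address this in two ways. First, the formulation treats the coefficients as parameters, so choosing them is part of the instance. Second, as a fallback, keep $\alpha_{\mathrm{store}}>0$ but make it vanishingly small relative to integer weights (scale $\tau$ by a factor larger than $\alpha_{\mathrm{store}}B$) and adjust the threshold accordingly. Miss and reacquisition terms can be absorbed: with $E_1=U$ one has $T^{\mathrm{miss}}=\tau(U)-T^{\mathrm{hit}}$, and reacquisition can be made zero by declaring nothing recoverable. Both are therefore affine in $T^{\mathrm{hit}}$ and preserve the monotone threshold correspondence.

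For partial observability, the fully observed instance above is itself a valid instance in which the policy's observation already reveals everything relevant, since $E_1$ is deterministic. So any algorithm for the partially observable problem also solves this special case, and hardness carries over.
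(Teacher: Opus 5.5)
Your proposal is correct and follows essentially the same route as the paper: a direct copy-reduction from (weighted) Maximum Coverage with $\mathrm{cov}(i)=S_i$, $E_1=U$, $\alpha_{\mathrm{hit}}=1$ and all other coefficients zero, then embedding it as the $T=1$ case with an empty initial cache, and noting that the fully observed instance is a special case of the partially observable one. The paper reduces from the unit-cost version ($s_i=1$, $B=k$) and only remarks that an active miss penalty gives an increasing affine transform of the coverage value, so your general-cost source problem and your scaling argument for a small positive $\alpha_{\mathrm{store}}$ are harmless additions rather than needed steps.
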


\begin{proof}{Proof.}
We reduce from the decision version of \textsc{Maximum Coverage} (NP-complete): a universe $U$, nonnegative weights $\tau(e)$, subsets $S_1,\ldots,S_n\subseteq U$, a cardinality budget $k$, and a target $W$; decide whether at most $k$ subsets have union weight at least $W$. Given such an instance, build a single-step retention instance with one memory per subset: $\mathrm{cov}(i)=S_i$, $s_i=1$, $B=k$, and single-step demand $E_1=U$ with the same weights. With $\alpha_{\mathrm{hit}}=1$ and all other coefficients zero, any budget-feasible $A$ has $R(A)=\sum_{e\in \bigcup_{i\in A}\mathrm{cov}(i)}\tau(e)$, exactly the weighted coverage of the selected subsets; hence a feasible set with reward at least $W$ exists iff the \textsc{Maximum Coverage} instance is a yes-instance. The construction is polynomial, so the problem is NP-hard.

The argument is unchanged with an active miss penalty: if every uncovered element contributes to the miss term, then $T_1^{\mathrm{miss}}=\sum_{e\in E_1}\tau(e)-T_1^{\mathrm{hit}}$, so $\alpha_{\mathrm{hit}}T_1^{\mathrm{hit}}-\alpha_{\mathrm{miss}}T_1^{\mathrm{miss}}=(\alpha_{\mathrm{hit}}+\alpha_{\mathrm{miss}})T_1^{\mathrm{hit}}-\alpha_{\mathrm{miss}}\sum_{e\in E_1}\tau(e)$, which for $\alpha_{\mathrm{hit}}+\alpha_{\mathrm{miss}}>0$ is a strictly increasing affine transform of the coverage value and preserves the maximizers and thresholds. Finally, the multi-step problem contains this as the $T=1$ case (empty initial cache, candidate pool equal to the arriving items, no future transition), so it is NP-hard; the partially observable formulation is NP-hard because the fully observed horizon-one instance is a special case.
\end{proof}

\begin{remark}[Submodularity and a greedy approximation]
\label{rem:greedy}
The per-step coverage objective $f(A)=\sum_{e\in\bigcup_{i\in A}\mathrm{cov}(i)}\tau(e)$ is monotone and submodular. Under a cardinality budget, greedy marginal-gain selection achieves the $(1-1/e)$ guarantee for monotone submodular maximization \citep{nemhauser1978analysis}; under a knapsack budget, $(1-1/e)$ is attainable by the cost-effective greedy of \citet{khuller1999budgeted} with partial enumeration of constant-size seed sets, whereas the plain utility-to-cost rule alone does not attain it. Improving the $(1-1/e)$ factor by any constant is NP-hard unless $\mathrm{P}=\mathrm{NP}$ \citep{feige1998threshold}. These guarantees presuppose known coverage sets $\mathrm{cov}(i)$, which require gold evidence unavailable at deployment; the Mixed-Score policy (Section~\ref{sec:mixed-score}) is thus a one-pass greedy ranking by the cost-penalized score $\mathrm{MS}_{t,i}$ that substitutes unobservable coverage with decision-time observable utility, forgoing the formal guarantee for deployability---the gap the evidence learner (Section~\ref{sec:method}) reduces by approximating evidence membership from logged supervision.
\end{remark}

% ---------------------------------------------------------------------
\section{Additional Robustness Analysis}
\label{ec:robustness}
The main text reports a fixed budget of 512 on LongMemEval (Figure~\ref{fig:longmemeval-sensitivity_budget512}). Figures~\ref{fig:locomo-sensitivity_budget32}--\ref{fig:longmemeval-sensitivity_budget1024} cover the remaining configurations: budgets 32/64/128 on LoCoMo and 256/1024 on LongMemEval. Each curve varies one coefficient by a multiplier $m \in \{0.25,0.5,0.75,1.0,1.5,2.0,4.0\}$ with all others at their defaults and $\alpha_{\mathrm{store}}$ fixed at $1.0$; the reward ranking and conclusions are stable across scales.

\begingroup
\setlength{\floatsep}{4pt plus 1pt minus 1pt}
\setlength{\textfloatsep}{4pt plus 1pt minus 1pt}
\setlength{\intextsep}{4pt plus 1pt minus 1pt}
\renewcommand{\textfraction}{0.02}
\renewcommand{\topfraction}{0.98}
\renewcommand{\bottomfraction}{0.98}
\renewcommand{\floatpagefraction}{0.70}

\begin{figure}[htbp]
\centering
\includegraphics[width=0.68\textwidth]{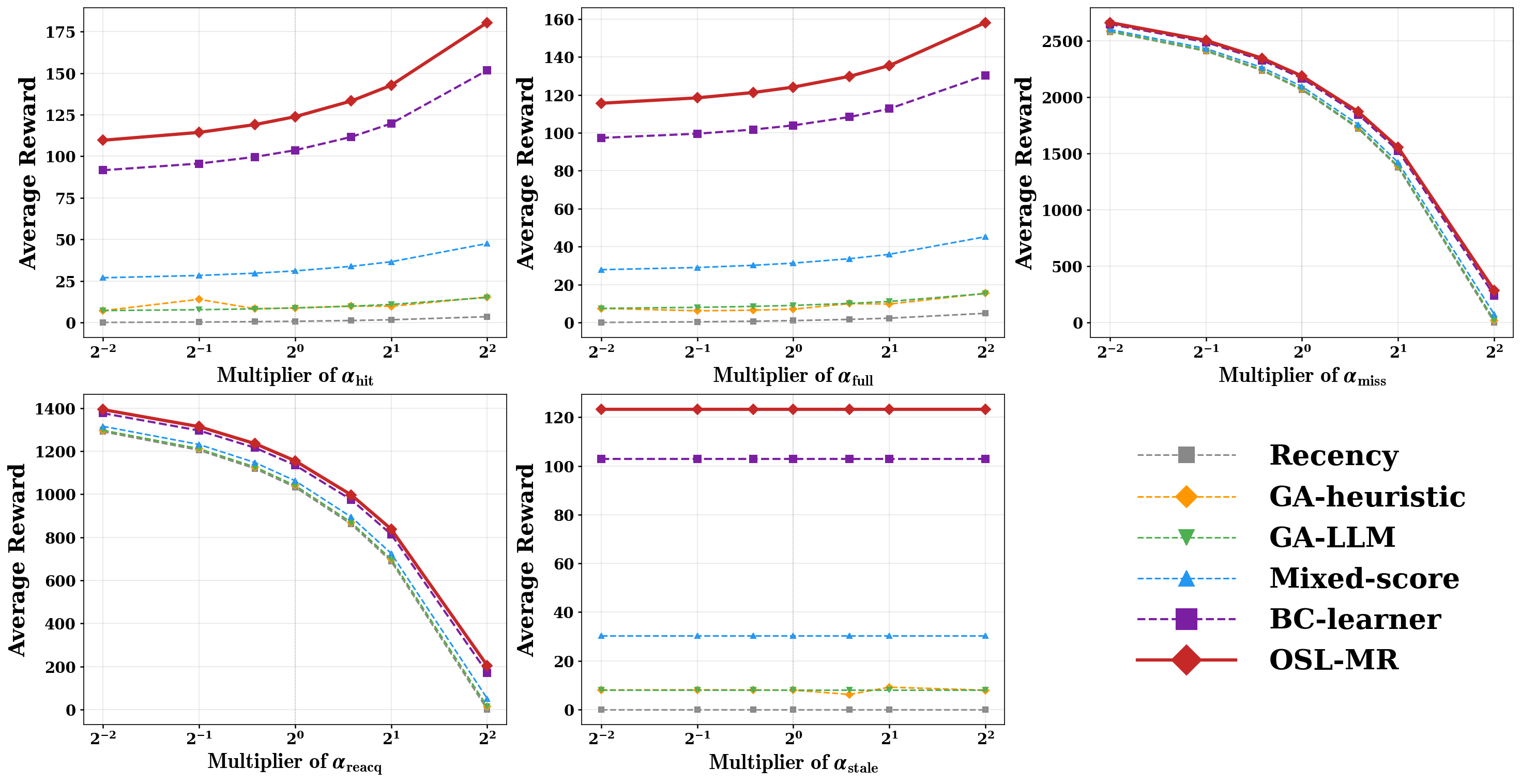}
\caption{Reward sensitivity on LoCoMo (budget 32).}
\label{fig:locomo-sensitivity_budget32}
\end{figure}

\begin{figure}[htbp]
\centering
\includegraphics[width=0.68\textwidth]{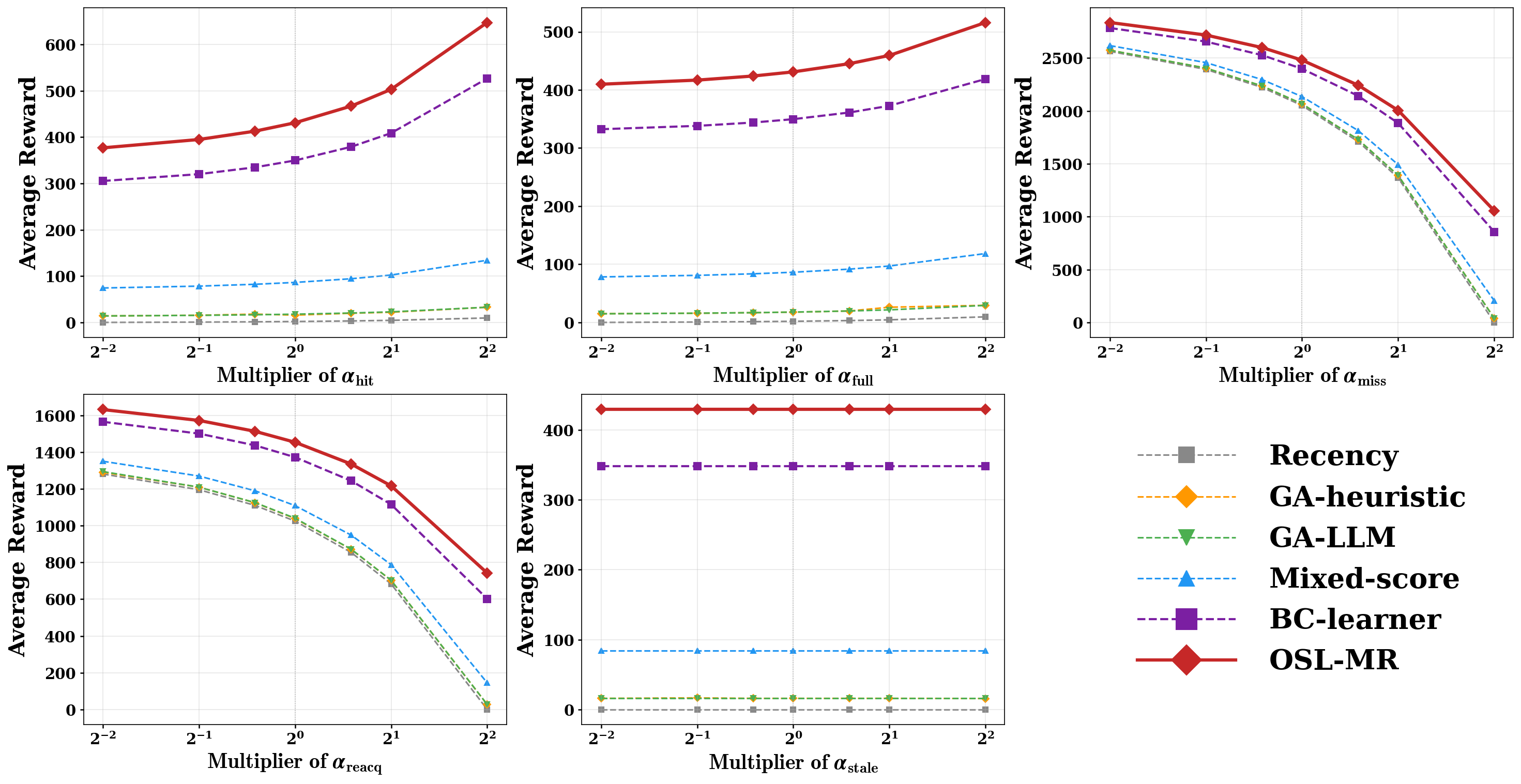}
\caption{Reward sensitivity on LoCoMo (budget 64).}
\label{fig:locomo-sensitivity_budget64}
\end{figure}

\begin{figure}[htbp]
\centering
\includegraphics[width=0.68\textwidth]{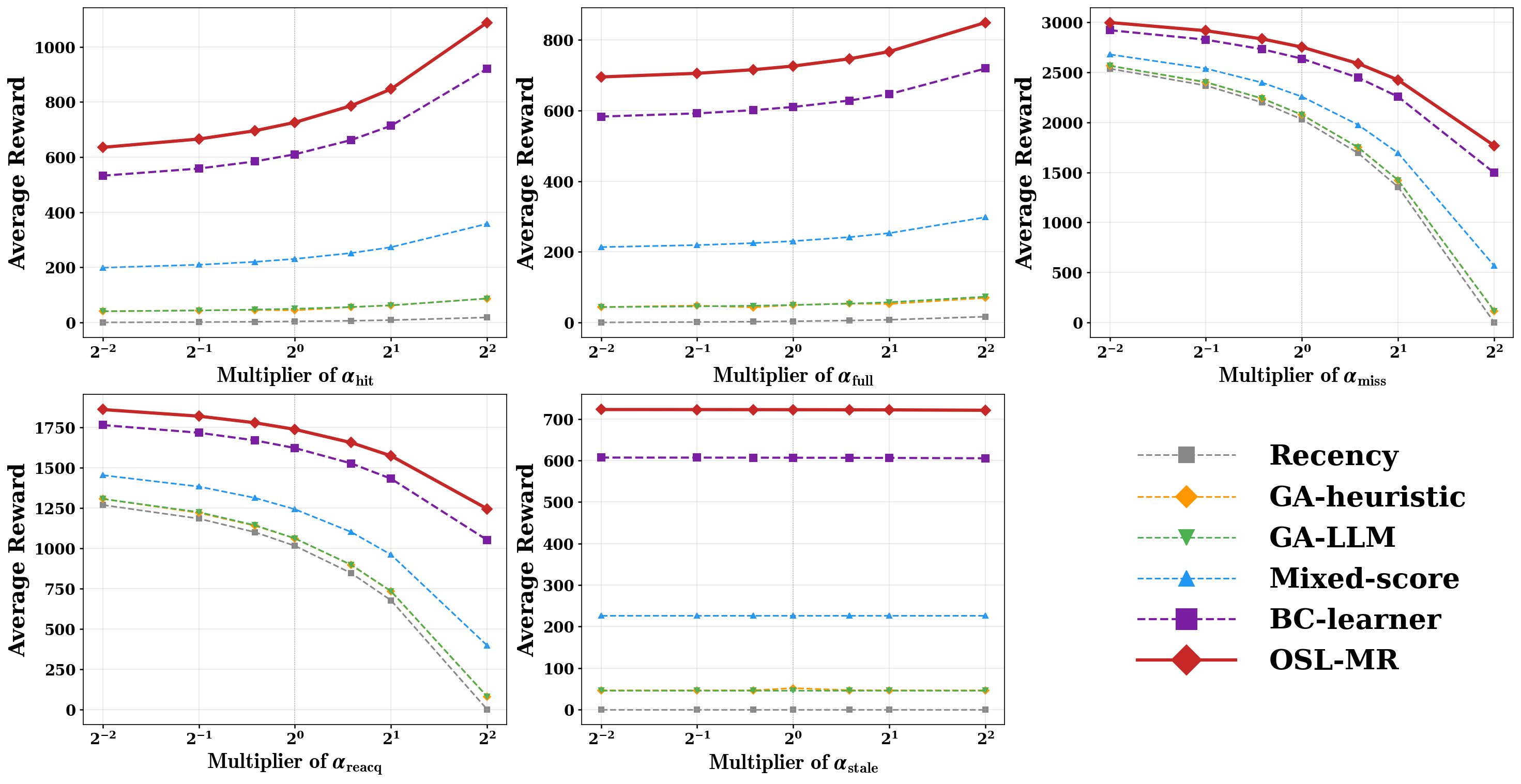}
\caption{Reward sensitivity on LoCoMo (budget 128).}
\label{fig:locomo-sensitivity_budget128}
\end{figure}

\begin{figure}[htbp]
\centering
\includegraphics[width=0.68\textwidth]{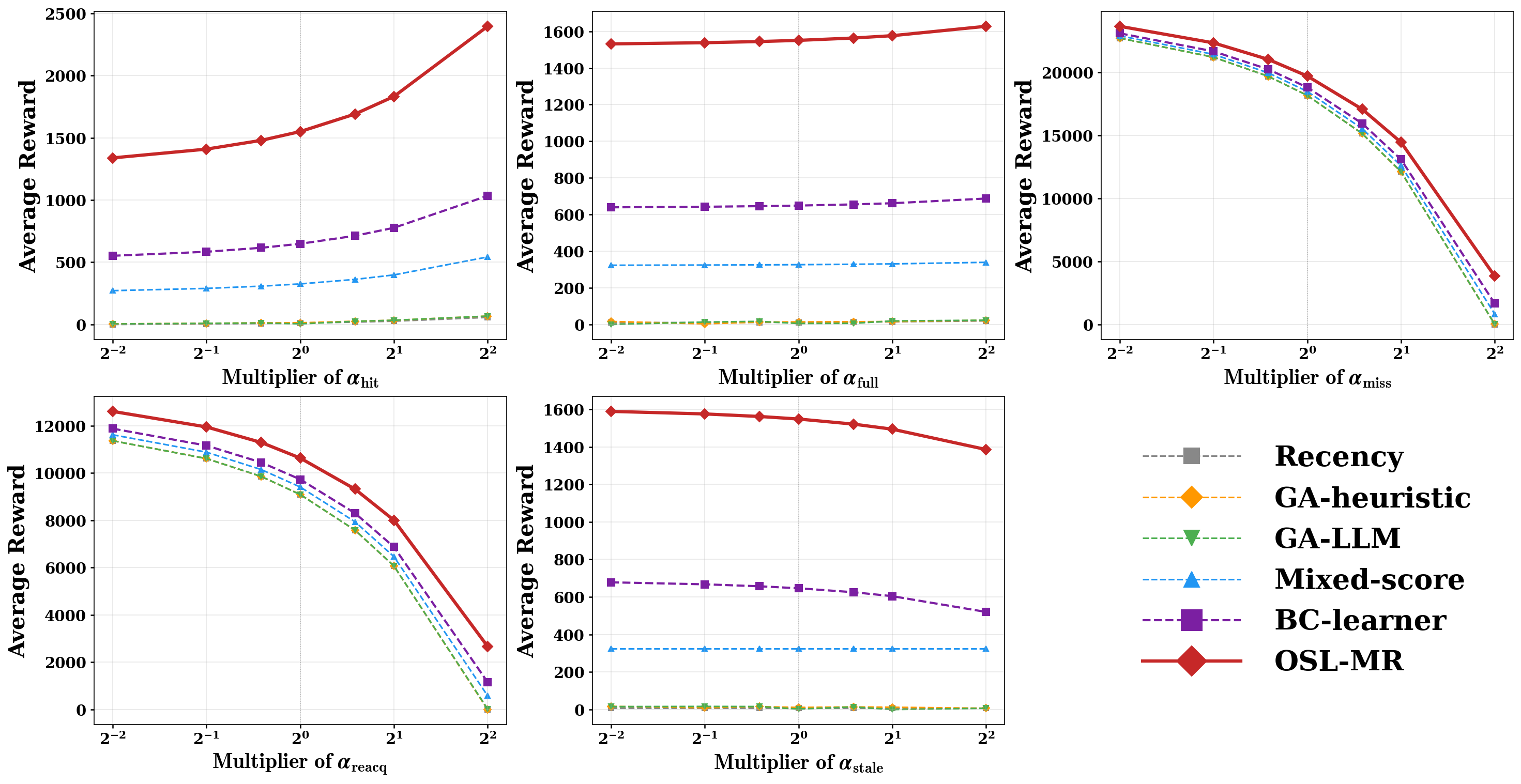}
\caption{Reward sensitivity on LongMemEval (budget 256).}
\label{fig:longmemeval-sensitivity_budget256}
\end{figure}

\begin{figure}[htbp]
\centering
\includegraphics[width=0.68\textwidth]{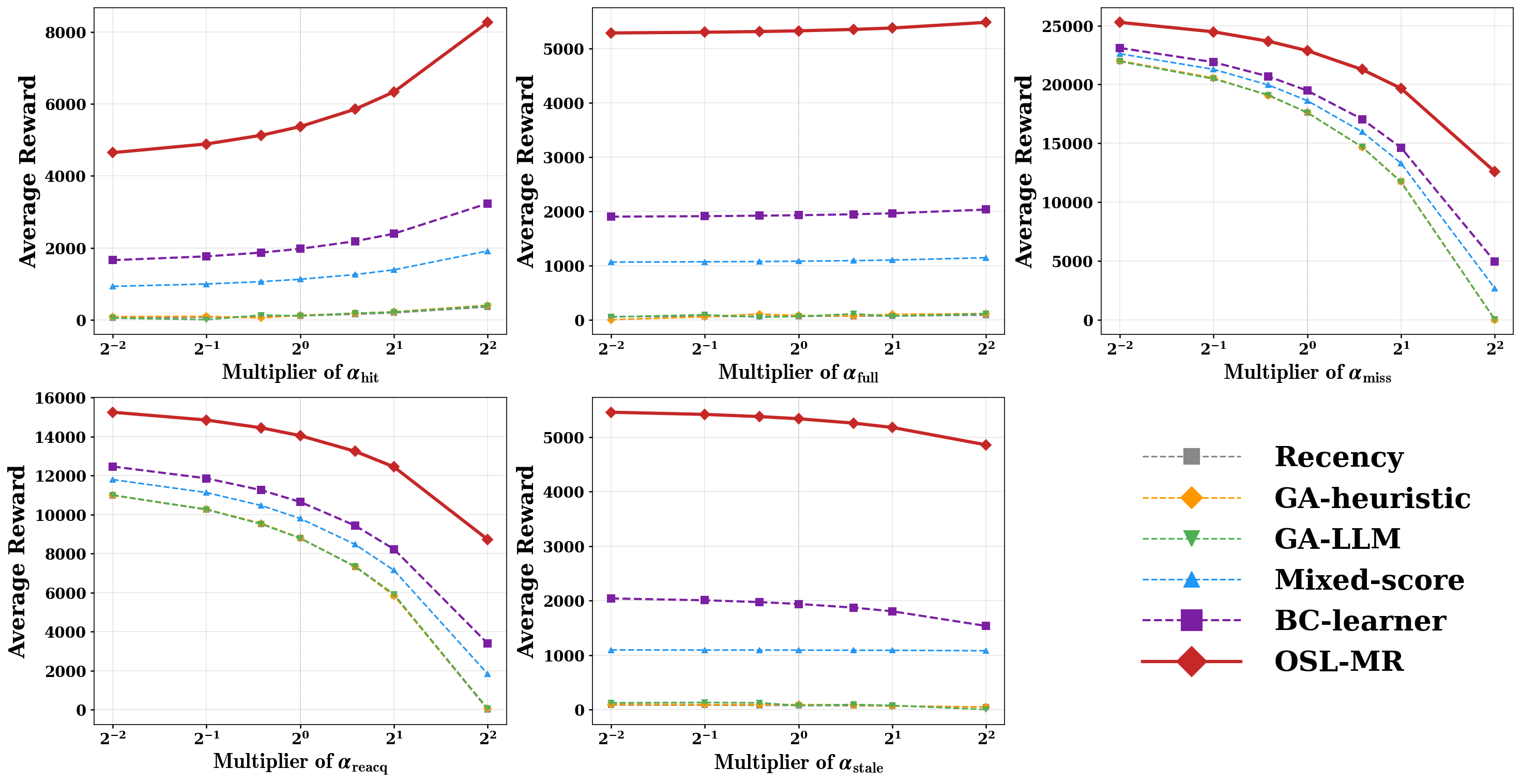}
\caption{Reward sensitivity on LongMemEval (budget 1024).}
\label{fig:longmemeval-sensitivity_budget1024}
\end{figure}
\endgroup

% ---------------------------------------------------------------------
\section{Freshness Proxy and Related Hyperparameters}
\label{app:freshness-params}

\paragraph{Memory-type taxonomy and labeling.}
Each memory $i$ has a fixed binary type $\mathrm{type}_i \in \{\text{`stable'}, \text{`temporary'}\}$ at creation. Offline, types come from an LLM classifier (DeepSeek-V4-Flash) prompted with the target utterance and a $\pm 3$-turn window, emitting a single JSON field \emph{memory\_type} (temperature $0$, at most $20$ tokens). `stable' covers long-valid content (identity, enduring preferences, settled relationships, lasting facts); `temporary' covers short-horizon content (plans, greetings, transient moods, ongoing intents). When LLM labels are unavailable (online-safe evaluation), a content-only heuristic labels as `temporary' utterances with planning cues (plan, tomorrow, will, tonight), short greetings (fewer than 100 characters), or present-tense cues (just, currently), and `stable' otherwise.

\paragraph{Decay dynamics.}
Let $\Delta t_i = t - t_i^{\mathrm{store}}$ be the steps since item $i$ was written (computed as $\text{anchor\_step}-\text{item\_id}$). The time-scaling factor is $\kappa(\Delta t) = 1 + \log(1 + \Delta t/\tau)$ ($\tau>0$), and the effective decay rate and freshness proxy are
\[
\gamma_i = \gamma^{\mathrm{base}}(\mathrm{type}_i)\cdot \kappa(\Delta t_i),
\qquad
\phi_{t,i} = \exp(-\gamma_i \Delta t_i),
\]
with type-specific base rates $\gamma^{\mathrm{base}}(\mathrm{type})$ below; stable memories decay slowly, temporary ones faster. We set $\mathrm{stale\_risk}_{t,i} = 1 - \phi_{t,i}$, used as an input feature for the learned policies (Section~\ref{sec:evidence-learning}); a memory is treated as \emph{stale} once $\phi_{t,i}$ falls below $0.5$, the threshold used by the stale term $T_t^{\mathrm{stale}}$ of Section~\ref{sec:method}. At creation $\phi_{t,i}=1$; as $\Delta t_i$ grows, temporary items decay toward $0$ while stable ones stay near-fresh (e.g., at $\Delta t_i=50$, $\approx 0.76$ for `stable' vs.\ $\approx 0$ for `temporary').

\noindent\textbf{Parameter calibration.}
All values are calibrated separately on the development splits of LoCoMo and LongMemEval:
\begin{itemize}\setlength\itemsep{2pt}
    \item $\tau = 10$, $\gamma^{\mathrm{base}}(\text{`stable'}) = 0.002$, $\gamma^{\mathrm{base}}(\text{`temporary'}) = 0.15$.
    \item Deployable Mixed-Score weights ($w_a=0$), LoCoMo: $w_r=0.22$, $w_q=0.56$, $w_a=0$, $w_e=0.12$, $w_s=0.06$, $w_u=0.02$, $w_c=0.06$ (the answer-overlap mass is reassigned to question overlap during recalibration).
    \item Deployable Mixed-Score weights ($w_a=0$), LongMemEval: $w_r=0.25$, $w_q=0.55$, $w_a=0$, $w_e=0.12$, $w_s=0.05$, $w_u=0.03$, $w_c=0.07$.
    \item For reference, the information-favored variant ($w_a\neq0$; not deployable, Section~\ref{ec:prior-ablation}) calibrates to $w_a=0.15$ on LoCoMo ($w_r=0.35$, $w_q=0.40$, $w_e=0.05$, $w_s=0.03$, $w_u=0.02$, $w_c=0.05$) and $w_a=0.08$ on LongMemEval ($w_r=0.22$, $w_q=0.48$, $w_e=0.12$, $w_s=0.06$, $w_u=0.02$, $w_c=0.06$).
\end{itemize}

% ---------------------------------------------------------------------
\section{LLM Prompts for Memory Annotation}
\label{app:llm-prompts}
We annotate each utterance with a memory type and an importance score using a chat model at temperature~$0$. Each prompt receives a \emph{context block} of up to three turns before and after the target utterance, formatted as [dia\_id] speaker: text with the target line prefixed by \textgreater{}\textgreater{}\textgreater; runtime placeholders \{context\_block\}, \{dia\_id\}, \{speaker\}, \{text\} are filled at call time.

\vspace{12pt}
\begin{promptbox}{Memory Annotation Prompt Templates}
\footnotesize
\label{app:prompt-memory-type}\label{app:prompt-importance}
\textit{\textbf{(a) Memory type (stable versus temporary).}}
\textbf{System.} Memory-type annotator; output only \{\textquotedbl memory\_type\textquotedbl:\textquotedbl stable\textquotedbl\} or \{\textquotedbl memory\_type\textquotedbl:\textquotedbl temporary\textquotedbl\}.
\textbf{User.} Label the target utterance in context: stable = long-lasting (identity, long-term preferences, stable relationships, facts that still hold); temporary = short-lived (plans, current emotions, small talk, ongoing events, unfulfilled intentions). Inputs: \{context\_block\}, \{dia\_id\}, \{speaker\}, \{text\} (max\_tokens $=20$).

\vspace{3pt}\hrule\vspace{3pt}
\textit{\textbf{(b) Memory importance (1--10).}}
\textbf{System.} Rate importance for long-term recall; return only \{\textquotedbl importance\textquotedbl: \textless integer 1--10\textgreater\} (1 = mundane, 10 = life-changing), clipped to $[1,10]$.
\textbf{User.} Rate the target on 1--10 (e.g.\ ``cleaning up the room'' $\rightarrow$ 2; ``asking your crush out on a date'' $\rightarrow$ 8). Inputs: \{context\_block\}, \{dia\_id\}, \{speaker\}, \{text\} (max\_tokens $=24$).
\end{promptbox}

%%%%%%%%%%%%%%%%%
\end{document}